\PassOptionsToPackage{dvipsnames}{xcolor}
\documentclass{article} 
\usepackage{iclr2026_conference,times}

\usepackage[pagebackref]{hyperref}       
\usepackage{url}            
\usepackage{booktabs}       
\usepackage{amsfonts}       
\usepackage{nicefrac}       
\usepackage{microtype}      
\usepackage{tikz}
\usepackage{newtxtext}
\usepackage{amsmath}
\usepackage{amssymb}
\usepackage{amsthm}
\usepackage{mathtools}
\usepackage{caption}
\usepackage{pifont}
\usepackage{enumitem}
\usepackage{algorithm}
\usepackage{backref}
\usepackage{titletoc}
\usepackage{wrapfig}
\usepackage{longtable}
\usepackage{array}
\usepackage{soul}
\usepackage[noend]{algpseudocode}
\usepackage{newtxtt}
\usepackage{tcolorbox}
\tcbuselibrary{skins,breakable,theorems}

\usepackage{setspace}

\usepackage{titlesec}

\titleformat{\part}[display]
  {\sc\Large \bfseries}  
  {\partname\ \thepart}
  {20pt}
  {\Large}  

\newtcolorbox{summary}[1][]{
enhanced, 
breakable,
coltitle=black,
fonttitle=\itshape\bfseries,
colback=red!5,
colframe=white,
boxrule=0pt,
arc=0pt,
left=5pt,
right=5pt,
top=5pt,
bottom=5pt,
title=#1,
colbacktitle=red!10
}

\def\mono#1{\texttt{#1}}

\newtcolorbox{takeaways}[1][]{
enhanced, 
breakable,
coltitle=black,
fonttitle=\itshape\bfseries,
colback=blue!5,
colframe=white,
boxrule=0pt,
arc=0pt,
left=5pt,
right=5pt,
top=5pt,
bottom=5pt,
title=#1,
colbacktitle=blue!10
}

\newtcolorbox{roadmap}[1][]{
enhanced, 
breakable,
coltitle=black,
fonttitle={\itshape\bfseries},
colback=Goldenrod!10,
colframe=white,
boxrule=0pt,
arc=0pt,
left=5pt,
right=5pt,
top=5pt,
bottom=5pt,
title=Roadmap,
colbacktitle=BurntOrange!20
}

\newtcolorbox{question}{
enhanced, 
breakable,
coltitle=black,
colback=white,
colframe=black,
arc=0pt,
left=1pt,
right=1pt,
boxrule=1pt,
fontupper=\itshape,
top=1pt,
bottom=1pt,
}

\usepackage{amsmath,amsfonts,bm}

\numberwithin{equation}{section}

\newcommand{\figleft}{{\em (Left)}}
\newcommand{\figcenter}{{\em (Center)}}
\newcommand{\figright}{{\em (Right)}}

\newcommand{\newterm}[1]{{\bf #1}}
\renewcommand{\phi}{\varphi}

\theoremstyle{plain}
\newtheorem{thm}{Theorem}[section]

\newtheorem{mainthm}{Main Result}
\newtheorem{lem}[thm]{Lemma}
\newtheorem{cor}[thm]{Corollary}
\newtheorem{prop}[thm]{Proposition}

\theoremstyle{definition}
\newtheorem{defn}{Definition}[section]

\newtheorem{example}[defn]{Example}
\newtheorem{asmp}[defn]{Assumption}

\theoremstyle{remark}
\newtheorem{rmk}{Remark}[section]

\theoremstyle{plain}
\newtheorem*{thm*}{Theorem}
\newtheorem*{lem*}{Lemma}
\newtheorem*{cor*}{Corollary}
\newtheorem*{prop*}{Proposition}
\newtheorem*{conj*}{Conjecture}
\newtheorem*{mainthm*}{Main Result}

\theoremstyle{definition}
\newtheorem*{defn*}{Definition}
\newtheorem*{axiom*}{Axiom}
\newtheorem*{example*}{Example}
\newtheorem*{asmp*}{Assumption}

\theoremstyle{remark}
\newtheorem*{rmk*}{Remark}
\newtheorem*{claim*}{Claim}

\allowdisplaybreaks

\def\figref#1{Figure~\ref{#1}}

\def\secref#1{Section~\ref{#1}}
\def\twosecrefs#1#2{Sections \ref{#1} and \ref{#2}}

\def\oneeqref#1{Equation~\eqref{#1}}
\def\twoeqrefs#1#2{Equation~\eqref{#1} and \eqref{#2}}
\def\algref#1{Algorithm~\ref{#1}}

\def\appref#1{Appendix~\ref{#1}}

\def\mainresultref#1{Main Result~\ref{#1}}
\def\twomainresultrefs#1#2{Main Results~\ref{#1} and \ref{#2}}

\def\defref#1{Definition~\ref{#1}}
\def\thmref#1{Theorem~\ref{#1}}
\def\lemref#1{Lemma~\ref{#1}}
\def\exref#1{Example~\ref{#1}}
\def\asmpref#1{Assumption~\ref{#1}}
\def\propref#1{Proposition~\ref{#1}}
\def\twoexref#1#2{Examples~\ref{#1} and~\ref{#2}}
\def\corref#1{Corollary~\ref{#1}}
\def\op{{\mathrm{op}}}

\def\1{\bm{1}}

\newcommand{\iid}{\mathrm{i.i.d.}}

\def\eps{{\epsilon}}
\newcommand{\norm}[1]{ \left\| #1 \right\| }

\newcommand{\doc}{{\rm{do}}}

\newcommand{\tout}{{\rm{out}}}

\newcommand{\interpd}{d_{\mathrm{interp}}}

\DeclareMathOperator{\Lip}{\mathrm{Lip}}
\newcommand{\Haus}{d_{\mathrm{H}}}
\DeclareMathOperator{\diam}{\rm {diameter}}
\newcommand{\reprd}{d_{\mathrm{repr}}}
\DeclareMathOperator{\concat}{concat}

\def\vu{{\bm{u}}}
\def\vv{{\bm{v}}}

\DeclareMathAlphabet{\mathsfit}{\encodingdefault}{\sfdefault}{m}{sl}
\SetMathAlphabet{\mathsfit}{bold}{\encodingdefault}{\sfdefault}{bx}{n}

\def\gI{{\mathcal{I}}}

\def\sF{{\mathbb{F}}}

\def\sH{{\mathbb{H}}}

\def\sV{{\mathbb{V}}}

\newcommand{\powerset}{\mathrm{SubsetsOf}}

\newcommand{\R}{\mathbb{R}}
\newcommand{\Z}{\mathbb{Z}}
\renewcommand{\P}{\mathbb{P}}

\newcommand{\parents}{Pa} 

\usepackage{mathptmx}

\newcommand{\citeposs}[1]{\citeauthor{#1}'s (\citeyear{#1})}
\newcommand{\citeclean}[1]{\citeauthor{#1}~\citeyear{#1}}

\hypersetup{ %
colorlinks=true,
citecolor=teal,
urlcolor=blue,
linkcolor=BrickRed,
}

\usepackage{tocloft}
\renewcommand*{\bibfont}{\footnotesize}
\algrenewcommand\alglinenumber[1]{\footnotesize #1}

\newenvironment{answer}{\par \textbf{\textit{A:}}}{}

\title{Tracking Equivalent Mechanistic Interpretations Across Neural Networks}

\author{Alan Sun \\
Carnegie Mellon University \\
\texttt{alansun@andrew.cmu.edu} \\
\And
Mariya Toneva \\
Max Planck Institute for Software Systems \\
\texttt{mtoneva@mpi-sws.org} \\
}

\iclrfinalcopy 

\begin{document}
\maketitle

\begin{abstract}
\newterm{Mechanistic interpretability (MI)} is an emerging framework for interpreting neural networks. Given a task and model, MI aims to discover a succinct algorithmic process, an \newterm{interpretation}, that explains the model's decision process on that task. However, MI is difficult to scale and generalize. This stems in part from two key challenges: there is no precise notion of a valid interpretation; and, generating interpretations is often an ad hoc process. In this paper, we address these challenges by defining and studying the problem of \newterm{interpretive equivalence}: determining whether two different models share a common interpretation, \textit{without} requiring an explicit description of what that interpretation is. At the core of our approach, we propose and formalize the principle that two interpretations of a model are equivalent if all of their possible \newterm{implementations} are also equivalent. We develop an algorithm to estimate interpretive equivalence and case study its use on Transformer-based models. To analyze our algorithm, we introduce necessary and sufficient conditions for interpretive equivalence based on models' representation similarity. We provide guarantees that simultaneously relate a model's algorithmic interpretations, circuits, and representations. Our framework lays a foundation for the development of more rigorous evaluation methods of MI and automated, generalizable interpretation discovery methods.\footnote{Our codebase can be found at \url{https://github.com/alansun17904/interp-equiv}}
\end{abstract}


\newcommand{\GetImpl}{{\color{BrickRed}{\textsc{GetImpl}}}}
\newcommand{\ReprDist}{\color{TealBlue}{\textsc{ReprDist}}}
\newcommand{\GetReprs}{{\color{ForestGreen}\textsc{GetReprs}}}
\newcommand{\Congruity}{{\color{BurntOrange}{\textsc{Congruity}}}}
\newcommand{\maketeal}[1]{\textcolor{teal}{#1}}
\newcommand{\makeor}[1]{\textcolor{BurntOrange}{#1}}


\section{Introduction}
Ensuring the interpretability of deep neural networks is central to concerns around AI safety and trustworthiness. Among many proposed interpretation methods, \newterm{mechanistic interpretability (MI)} has recently emerged as a promising post-hoc interpretability framework\footnote{An interpretability method that does not require any retraining. These methods can be applied in parallel with inference and do not compromise the model's performance.}~\citep[\textit{inter alia}]{olah2020an,elhage_mathematical_2021,wang_interpretability_2022}. MI generally operates in two stages: \newterm{(a)} identifying a minimal subset of the model's computational graph that drives functional behavior; and \newterm{(b)} attaching algorithmic interpretations to each of the recovered mechanisms. In contrast to other attribution-based interpretability methods, MI yields a concrete, human-interpretable algorithmic process that faithfully describes model behavior~\citep{bereska2024mechanistic,ajk2025bridging}. The resulting processes can be used to better understand a model's inductive biases which may inform further improvements~\citep[\textit{inter alia}]{geva_transformer_2021,meng_locating_2022,arditi_refusal_2024}.

For a fixed task and network, MI approaches can be broadly categorized as either \newterm{top-down} or \newterm{bottom-up}~\citep{vilas24ainner}. Top-down methods (b\,\textrightarrow\,a) propose a set of high-level candidate algorithms for the task, and then attempt to align those algorithms to the network. While the alignment step can be automated and made statistically rigorous, proposing candidate algorithms is highly ad hoc. For complex tasks, it is often unclear how to even formulate plausible candidates. Additionally, alignment with a proposed algorithm is only a necessary condition for interpretability, and does not guarantee that the model truly implements the intended algorithm~\citep{wu_interpretability_2023,geiger24alignments,geiger2023causal,sun_hyperdas_2025}. 
In contrast, bottom-up methods (a\,\textrightarrow\,b) first isolate mechanisms-of-interest within the model (called a \newterm{circuit}) then assign interpretations to components within that circuit~\citep{olah2020an}. Although circuit discovery can be rigorously formulated as an optimization problem~\citep{bhaskar_finding_2024}, creating and assigning meaningful interpretations to circuits typically demand significant manual effort. Moreover,~\cite{meloux2025everything} has recently shown that neither top-down nor bottom-up approaches are identifiable: there exists a many-to-many relationship between high-level algorithms and circuits. This ambiguity makes it hard to verify whether a given interpretation is complete and faithful to the model~\citep{jacovi-goldberg-2020-towards}. We present a detailed discussion of the related literature in \appref{sec:related}.

In this paper, we define and study a relaxed, subproblem of MI: \newterm{interpretive equivalence}. Concretely, we seek to determine whether two models implement the same high-level algorithm, \textbf{without} requiring an explicit description of what that algorithm is. Understanding interpretive equivalence can bridge the gap between bottom-up and top-down approaches through \newterm{reductions}. Consider two examples:

\begin{example}[Reduction to Simpler Models]\label{ex:simple-models}
MI is computationally prohibitive on larger models~\citep{adolfi2025the}. To mitigate this, if we could show that a small model is interpretively equivalent to a large one, then MI analyses on the small model could interpret the larger model. 
\end{example}
\begin{example}[Reduction to Simpler Tasks]\label{ex:simple-tasks}
MI's scope is limited because interpreting models that solve complex tasks is at least as hard as manually designing an algorithm to solve them~\citep{nanda_progress_2022,zhong_clock_2023}. Interpretive equivalence provides a criterion to decompose complex tasks into simpler, interpretively equivalent ones. 
\end{example}
\vspace{-0.2cm}

\begin{figure}
\centering
\includegraphics[width=0.9\textwidth]{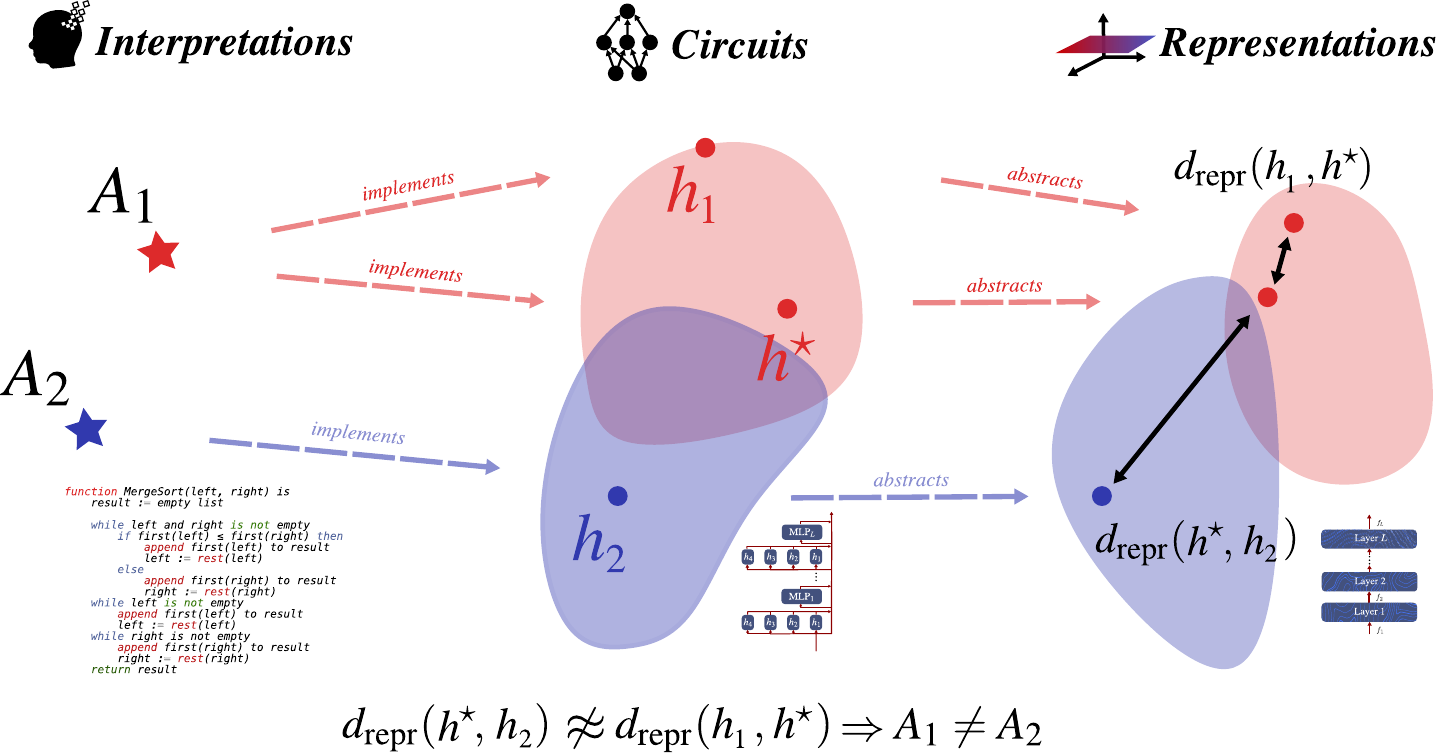}
\caption{A high-level overview of our algorithmic approach to interpretive equivalence. Consider models $h_{1}, h_{2}$ that correspond to possibly \textit{unknown} interpretations $A_1,A_2$ \figleft. To determine whether models $h_1$ and $h_2$ are interpretively equivalent, we propose a two-step procedure. First, we sample another model $h^\star$ that also has interpretation $A_1$ \figcenter. Second, we compare the representation similarity ($\reprd$) between $h_1, h^\star$ and $h^\star, h_{2}$ \figright. If $A_1,A_2$ are equivalent, then averaged over all implementations $h^\star$, we should not be able to differentiate $\reprd(h_{1}, h^\star)$ and $\reprd(h^\star, h_{2})$.}
\label{fig:back}
\vspace{-0.5cm}
\end{figure}

At the core of our approach (\figref{fig:back}), we propose the principle that two high-level algorithms (henceforth termed \newterm{interpretations}\footnote{We distinguish between an algorithm and an interpretation. Informally, an algorithm is independent of any computational abstraction. However, we show in \appref{app:abs-dependency} that this abstraction-free notion results in pathological equivalence definitions. Thus, we use the terminology \newterm{interpretation} to signify a dependence on a chosen level of computational abstraction.}) are equivalent if their \newterm{implementations} are equivalent. 
Based on this principle, we design an algorithm to detect equivalence by measuring representation similarity. We prove that representation similarity is necessary and sufficient to characterize interpretive equivalence. Overall, our contributions span both theory and practice:
\begin{itemize}[leftmargin=*, nosep]
\item We propose an algorithm to compute interpretive equivalence between two models \textit{without} interpreting them (\algref{alg:congruence}). 
\item We demonstrate that \algref{alg:congruence} is well-calibrated on a simple task where the ground truth is known. Then, we show its potential to find reductions (\twoexref{ex:simple-models}{ex:simple-tasks}).
\item We specialize the theory of causal abstraction to define interpretations, circuits, and representations (\twosecrefs{sec:causal}{sec:ie}). As byproducts of this formality, we obtain a measure of interpretation quality and an alternative characterization of interpretations as interventions (\secref{sec:ie} and \appref{app:compute-ie}). 
\item We prove necessary and sufficient conditions for models' interpretive equivalence based on their representation similarity. To our knowledge, these guarantees are the first to relate a model's interpretations, circuits, and representations (\secref{sec:impl-repre}).
\end{itemize}

Throughout our paper, we use examples from language processing and Transformers. However, our framework does not depend on any properties of the input modality. 
\vspace{-0.2cm}
\section{Interpretive Equivalence Through Congruent Representations}\label{sec:congruence}
\vspace{-0.2cm}

We define mechanistic interpretations $A_1, A_2$ as equivalent if they have equivalent implementations. That is, \textbf{any model which can be interpreted by $A_1$ must also be interpreted by $A_2$ and vice versa.} We justify this principle later, but first we offer an algorithm to approximate this equivalence. This requires clarifying two procedures: \textbf{(1)} how do we enumerate all implementations of $A_1,A_2$? \textbf{(2)} How do we measure the distance between these sets of implementations? 
\begin{minipage}[l]{0.5\columnwidth}
\textbf{Enumerating Implementations through Interventions.} We call any model $h$ that has a mechanistic interpretation $A$ an \newterm{implementation} of $A$ (\defref{def:implementation}). We take a bottom-up approach to generating implementations. Using causal interventions, we identify components in the model that are causally unrelated to the model's behavior~\citep{conmy_towards_2023,path-patching} (i.e. they are not a part of the model's circuit). A model's mechanistic interpretations should be invariant to perturbation or ablation of these components. Dually, each time we perturb or ablate such a component, we yield a ``new'' model that is causally equivalent to the original one (and as a corollary, shares the same interpretation). Intuitively, every implementation of $A$ should be reachable through a causal intervention applied onto $h$. We take this dual perspective to generating implementations by adding, removing, or modifying these unimportant components~\citep{geiger24alignments,gupta2024interpbench}. This procedure is denoted as \GetImpl~(\algref{alg:congruence}; detailed in \appref{sec:exp-details}). 
\end{minipage}
\hfill
\begin{minipage}[r]{0.45\columnwidth}
\begin{algorithmic}[1]
\Procedure{\Congruity}{$h_{1}, h_{2}, n$}
\State $\text{s} \gets 0$
\For{$i\gets 1\ldots n$}
\State $h_1, h_{1}^\star \gets \Call{\GetImpl}{h_1}$
\State $h_2, h_{2}^\star \gets \Call{\GetImpl}{h_2}$
\State $\text{s} \gets \text{s} + \Call{\ReprDist}{h_{1}, h_{1}^\star, h_{2}}$
\State $\text{s} \gets \text{s} + \Call{\ReprDist}{h_{2}, h_{2}^\star, h_{1}}$
\EndFor
\Return $1 - |s/n - 1|$
\EndProcedure
\Procedure{\ReprDist}{$h_{1}, h_{2}, h_{3}$}
\For{$i \gets 1, 2, 3$}
\State $R_i \gets \Call{\GetReprs}{h_{i}}$
\EndFor
\If{$\reprd(R_1, R_2) \leq \reprd(R_1, R_3)$}
\State \Return 1
\EndIf
\State\Return 0
\EndProcedure
\end{algorithmic}
\captionsetup{type=algorithm}
\caption{Two models are \newterm{congruent} if representation similarity alone cannot differentiate them. $\reprd(R_1, R_2)$ measures representation similarity between representations of $h_1, h_2$. {\GetReprs}, {\GetImpl} retrieve the hidden representations of a given model and its implementations, respectively.} 
\label{alg:congruence}
\end{minipage}

\textbf{Representation Similarity.} We identify each implementation with their hidden representation spaces ({\GetReprs} in \algref{alg:congruence}). Given two implementations, we measure their distance using linear representation similarity ($\reprd$ defined in \defref{def:repr-sim}). Informally, linear representation similarity captures the extent to which the representations of one network can be linearly transformed to the other and vice versa. We notate this as {\ReprDist}~(\algref{alg:congruence}).

Suppose that the implementation sets of $A_1, A_2$ were exactly equal. Let $h_1, h_1^\star$ be implementations sampled from $A_1$ and $h_2$ be an implementation sampled from $A_2$.\footnote{For the sake of argument, suppose i.i.d. sampling from a uniform distribution over $A_1,A_2$.} By symmetry, $\P[\reprd(h_1, h_1^\star) < \reprd(h_2, h_1^\star)] = \P[\reprd(h_1 , h_1^\star) > \reprd(h_2, h_1^\star)]$. In this way, the implementations of $A_1,A_2$ are \newterm{congruent} with respect to $\reprd$.  

Combining these concepts, we have our approximation of interpretive equivalence: \Congruity. 

\section{Experiments}\label{sec:experiments}

Herein, we demonstrate three applications of \algref{alg:congruence}. First, on a toy task where ground-truth interpretations are known, we show that \Congruity~is well-calibrated (\secref{sec:calibration}).
Next, we apply our framework to pre-trained language models of various sizes (GPT2~\citep{radford2019language} and Pythia~\citep{biderman_emergent_2023}) and demonstrate that \Congruity~can distinguish between models that exhibit fine-grained algorithmic differences across various model scales (\secref{sec:complex-models}). Lastly, we show how \Congruity~can be used to relate a complex task like next-token prediction to a simpler one such as parts-of-speech identification (\secref{sec:complex-tasks}). 

\subsection{Calibrating Congruity}\label{sec:calibration}
We consider the task of \textbf{$n$-Permutation Detection}: determining whether a given sequence of $n$ numbers is a permutation of the elements $1,\ldots,n$. For example, when $n=3$, \mono{[3, 1, 2] $\to$ True} and \mono{[1, 2, 2] $\to$ False}. The task is sufficiently rich to support different interpretations, yet simple enough that solutions can be hard-coded as Transformers. We manually devise \textbf{six} different interpretations to solve this task whose procedures we detail in
Appendix~\ref{sec:exp-details}. Roughly, their approaches can be organized into two buckets:
\begin{enumerate}[nosep, leftmargin=*]
  \item \textbf{Sorting-Based (Interpretations 1--4).} First sort the
    list of numbers by ascending (or descending) order, then directly check whether the resulting sequence is equal to $1,2,\ldots,n$ (or $n, n-1,\ldots, 1)$.
  \item \textbf{Counting-Based (Interpretations 5--6).} Exploit the fact
    that the vocabulary contains exactly $n$ numbers and use the
    pigeonhole principle to detect duplicates in the given sequence.
\end{enumerate}
For all experiments, we fix $n = 10$. Using the Restricted Access Sequence Processing Language (RASP), we hard-code six bidirectional Transformers (of various architectures) to implement each interpretation~\citep{weiss_thinking_2021}. Then, for each hard-coded RASP Transformer, we leverage a technique of~\citeposs{gupta2024interpbench} to generate 100 model variants (with different architectures and weight configurations). These variants are constrained to maintain the same underlying interpretation as their hard-coded counterpart. 
So for each interpretation, we have $100$ implementations. In total, we have $6 \times 100$ models that all achieve $96\%$+ accuracy on $10$-permutation detection. We now directly compute \Congruity~(\algref{alg:congruence}) between pairs of models from our generated implementations. We perform hypothesis testing by bootstrapping a 95\% confidence interval over the final output.\footnote{Here, $H_0:$ the two models do not share the same interpretation; and our alternate, $H_1:$ the two models share the same interpretations. We compute a Wald confidence interval with 20 bootstrap samples.} The results are shown in \figref{fig:experiment-panel}\figleft. 

Along the diagonals of \figref{fig:experiment-panel}\figleft, we see significantly high congruence. So when models share the same interpretation, representations across their implementations are approximately equivalent under linear transformation. On the other hand, off-diagonal entries admit low congruence. Thus, representation similarity across set of implementation as a whole can differentiate individual interpretations. These empirical findings complement our bounds in \twomainresultrefs{thm:sufficient}{thm:necessary} because they suggest that~\Congruity~is both necessary and sufficient to detect interpretive equivalence. Perhaps even more interesting, we observe a breaking row/column in \figref{fig:experiment-panel}, where Interpretations 1--4 (the top-left $4\times 4$ square) have an average within-group congruence of $0.43$. This is larger compared to the average across-group congruence (rows/columns 5-6) with Interpretations 5--6: $0.01$. These groupings correspond to our broad algorithmic buckets above and suggest that \Congruity~could also be adopted as a graded notion to characterize broad-stroke interpretive differences. 

\begin{figure}
    \centering
    \includegraphics[width=\linewidth]{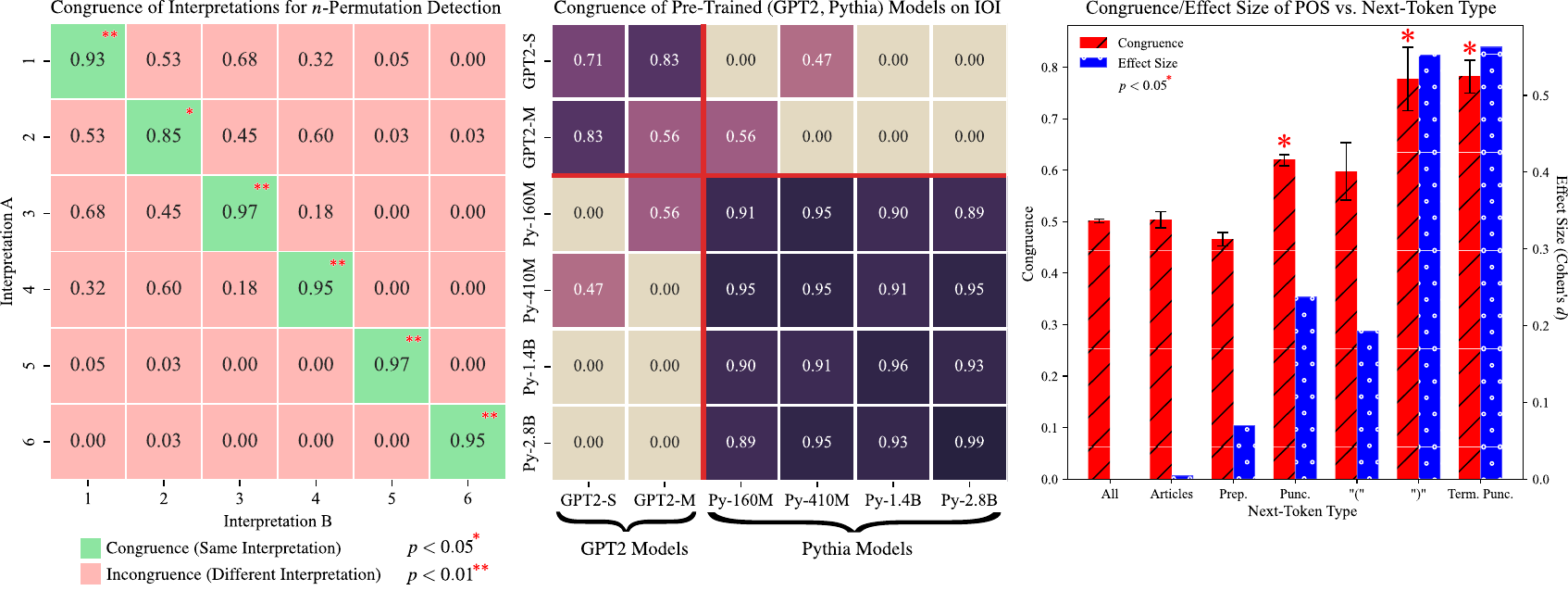}
    \vspace{-0.2cm}
    \caption{\figleft~Average congruity between models associated with different interpretations. {\textcolor{WildStrawberry}{$\blacksquare$}} indicates models have different interpretations; whereas {\textcolor{LimeGreen}{$\blacksquare$}} indicates that models have statistically indistinguishable interpretations. \figcenter~Congruity between GPT2 and Pythia family of models on the IOI task. {\textcolor{red}{\raisebox{0.3ex}{\rule{0.5cm}{2pt}}}} groups models based on their actual interpretive differences observed by~\cite
    {tigges_llm_2024,merullo2024circuit}. \figright~Congruity between GPT2 on next-token prediction (for different token types: all tokens, articles, prepositions, punctuation, parentheses, and terminal punctuation) vs. GPT2 on in-context parts-of-speech identification.}
    \label{fig:experiment-panel}
    \vspace{-0.5cm}
\end{figure}

\subsection{Reduction to Simpler Models}\label{sec:complex-models}
We now consider the task of \newterm{indirect-object identification (IOI)}~\citep{wang_interpretability_2022}: given a sentence like ``\mono{When John and Mary went to the store, John gave the drink to }'' the model should complete the sentence with ``\mono{Mary.}'' Because IOI can be solved algorithmically, it has been studied across many models: GPT2-small/medium and the Pythia class of models. While \cite{merullo2024circuit} find that GPT2-small and medium use the same circuit, \citet{tigges_llm_2024} find that Pythia models across all scales use a consistent but different circuit from GPT2 models (\citeclean{merullo2024circuit}, Appendix C; \citeclean{tigges_llm_2024}: Appendix D). We use these differences as a practical testbed for \Congruity~and explore how it generalizes across both model families and scales (85M to 2.9B parameters).\footnote{Notably, \cite{tigges_llm_2024} also find that circuits in larger models \emph{require more components}, even when the underlying interpretation is constant. Thus, for \Congruity~to be effective in this setting, it also needs to move beyond identifying architectural similarities.} 

For each model, we generate 10 parallel implementations by intervening on the components found \textit{not} to be in the IOI circuit (identified by both \citeclean{tigges_llm_2024} and \citeclean{merullo2024circuit}). Then, we apply \Congruity, each time computing representation similarity with 200 IOI sentences. The results are shown in \figref{fig:experiment-panel}\figcenter. We find that the Pythia models show high within-group congruence across different scales. GPT2 models exhibit similar behavior. This supports our intuition that interpretive equivalence could be leveraged to reduce the interpretations of complex models into interpretations of smaller ones (\exref{ex:simple-models}). For example, our results affirm that on the IOI task, Pythia-2.8b is interpretively equivalent to Pythia-160M; thus, \textit{a priori}, it suffices to only interpret the latter. Indeed, \cite{tigges_llm_2024} finds that both models share the same interpretation. On average, across-group congruence between Pythia and GPT2 is significantly lower (0.13 versus 0.73 and 0.92) which supports our previous insight that representation similarity provides increased identifiability over interpretations. It is unclear why Pythia-160M and 410M show increased congruence with GPT2-medium and small, respectively; perhaps, subtle similarities in name-mover heads representations could explain this (\citeclean{tigges_llm_2024}, Section 3.2).

\subsection{Reduction to Simpler Tasks}\label{sec:complex-tasks}
Interpreting next-token prediction poses a challenge for both bottom-up and top-down MI approaches. For top-down approaches writing down a symbolic, end-to-end algorithm for next-token prediction is difficult. Bottom-up approaches face the opposite problem: circuit discovery may identify the entire model as important, failing to reduce the search space of interpretations. We show here that interpretive equivalence may offer some first-steps towards mechanistically understanding next-token prediction. Concretely, we identify sets of next-tokens for which GPT2's next-token prediction process is interpretively similar to parts-of-speech identification (POS\footnote{Given \mono{tree:noun; run:verb; quick:adverb; fluffy:} the model needs to output \mono{adjective}.}). 

POS is a largely syntactic task since it focuses solely on the grammatical role of words rather than their meaning in-context.
Thus, we expect POS to be interpretively equivalent to the prediction of ``syntactic tokens.'' For computational ease, we conduct all experiments on GPT2. To discover the POS circuit in GPT2, we apply a method of \citeposs{todd2024function}. We detail this process, along with the circuit we discover in \appref{sec:exp-details}. We construct a dataset for next-token prediction by uniformly sampling 100 sentences from the C4 dataset~\citep{t5}. 

We collate tokens from our next-token prediction sentences into disjoint groups of interest. For each group, we apply \Congruity~and compute representation similarity between our extracted POS circuit and the last-token hidden representations of the model. The results across different token groups are shown in \figref{fig:experiment-panel}\figright.  

As a control, we first consider the group of all tokens. We observe a nonzero congruence of 0.48 with POS. Next, we consider token groups \newterm{articles} and \newterm{prepositions}. Prediction of these tokens typically appear mid-sentence and depend heavily on semantics.\footnote{Consider the fragment: ``\mono{[context]} The students are looking \mono{[mask]}'' the next prepositions---``at'', ``into'', or ``for''---are all syntactically sound, but they are semantically ambiguous without further context. Articles are similar: consider, ``I need to buy \mono{[mask]} car.'' \mono{[mask]} could be either the indefinite (``a'') or definite (``the''), both are syntactically valid but the choice requires pragmatic reasoning.} Thus, we expect that predicting these tokens should yield no more interpretive equivalence to POS than the control. Indeed, we find POS's congruity to articles and prepositions to be statistically indistinguishable from the all-token average. 

We now heuristically identify two sets of ``syntactic tokens:''
\begin{enumerate}[leftmargin=0.2in, nosep]
    \item \newterm{Terminal Punctuation} like ``.'', ``?'', or ``!'' mark the end of a sentence. Accurate prediction of these tokens intuitively requires syntactic identification of subject-verb-object relations and (in)dependent clauses. 
    \item \newterm{Closing Brackets/Quotations} like ``)'' can be implemented as skip-trigrams in one-layer attention Transformers~\citep{elhage_mathematical_2021}. So, we suspect that understanding sentence pragmatics is not needed for their prediction.
\end{enumerate}
We find that both groups yield significantly higher congruence compared our all-token control with medium effect size (measured through Cohen's $d$). This stands in contrast to general punctuation and opening brackets/quotations. Although these other punctuation groups admit statistically significant congruence with POS, we observe a small effect size ($d < 0.3$). We hypothesize that the placement of these tokens rely more on semantic processes. For example, commas may be placed for emphasis or to add dependent clauses rather than strictly maintain grammatical consistency. 

\vspace{-0.1cm}
\section{Representations, Circuits, Interpretations as Causal Models}\label{sec:causal}
\vspace{-0.1cm}
We now present our framework of interpretive equivalence. This theory grounds our algorithmic contributions in \secref{sec:congruence}. Whenever possible, we present an informal treatment and defer the precise definitions to \appref{app:glossary}, where we also have a full glossary and summary of notations. The next sections are organized as follows:
\begin{enumerate}[leftmargin=0.3in, nosep]
\item In \secref{sec:causal}, we define \textbf{circuits}, \textbf{representations}, and \textbf{interpretations} (Definitions~\ref{def:circuit-informal},~\ref{def:representation}, and~\ref{def:interpretation-informal}, respectively).
\item In \secref{sec:ie}, we define a criterion for when two interpretations are equivalent. We also introduce \textbf{interpretive compression}, a measure of interpretation abstraction. 
\item In \secref{sec:impl-repre}, we prove that representation similarity can describe interpretive equivalence (\twomainresultrefs{thm:sufficient}{thm:necessary}) and show that \Congruity~is closely related to these quantities (\mainresultref{thm:congruity}). 
\end{enumerate}

Let $\Sigma$ be an alphabet and $\Sigma^\star$ be all finite strings formed using $\Sigma$.
We define a \newterm{language model} as a function $h : \Sigma^\star \to \R^{|\Sigma|}$. We often are interested in $h$'s behavior on a subset of inputs $S \subset \Sigma^\star$, rather than on all of $\Sigma^\star$. For this reason, we term $S$ a \newterm{task}. 
While $h$ alone serves as a sufficient functional description, interpretability requires understanding $h$'s underlying computational process. For this, we rely on three behavioral characterizations of a neural network at different levels of abstraction: $h$'s \newterm{circuits}, \newterm{representations}, and \newterm{interpretation}. 

\subsection{Circuits}
\begin{defn}[Deterministic Causal Model,~\citeclean{geiger2023causal}]\label{def:causal-model}
A (deterministic) causal model with $m$ components is a quadruple $(\sV, U, \sF, \succ)$ where $\sV = (\vv_1, \ldots, \vv_m)$ is a set of \newterm{hidden} variables such that $|\sV| = |\sF| = m$, $U$ is an \newterm{input} variable, and $\succ$ defines a partial ordering over $\sV$.  
For each $\vv_k \in \sV$,
\begin{equation}
\vv_k \triangleq f_k(\parents(\vv_k), U)
\end{equation}
where $f_k \in \sF$ and $\parents(\vv_k) \subset \{ \vv \in \sV : \vv \succ \vv_k \}$ are the \newterm{parents} of $\vv_k$.
\end{defn}
A deterministic causal model describes a computational graph where hidden variables $\sV$ (nodes) store latent computation results computed by the functions in $\sF$ (edges). Since a computation graph is acyclic, the data/compute flow determined by $\sF$ induces a partial ordering over $\sV$, $\succ$.  

For a specific input setting $U \gets \vu$, we denote by $\sV^\sF(\vu)$ to be the unique \newterm{solution}: a configuration of the values $\sV$ consistent with both $\vu$ and $\sF$~\citep{peters2017elements}. Likewise, $\vv_i^\sF(\vu)$ is the solution of $\vv_i \in \sV$ to $U \gets \vu$. \textbf{In this way, $\vv_i^\sF$ is a function that maps from $S \to \R^{n_i}$.} 

There is a natural tradeoff between the collective complexity of $\sV$ and the complexity of any individual operator in $\sF$. On one extreme, any blackbox model $h$ can be expressed as a \newterm{trivial causal model} with one hidden variable: $\vv_1 \triangleq h(U)$. On the other, each $\vv \in \sV$ could be a single neuron (thus $|\sV| \sim 10^9$), while elements in $\sF$ consist of primitive operations like dot-products. In this way, $\sV$ determines the level of abstraction of the causal model. We now use this formalism to define circuits.

\newterm{Circuits} form the starting point for almost all bottom-up approaches in MI~\citep[\emph{inter alia}]{wang_interpretability_2022,conmy_towards_2023,lieberum_does_2023}. Intuitively, for some task $S$, a circuit describes the (minimal) computational pathways used by the model to produce $h(S)$. Throughout our treatment, we do not require minimality as finding minimal circuits has been shown to be an intractable combinatorial optimization problem~\citep{adolfi2025the}.

\begin{defn}[Circuit]\label{def:circuit-informal}
An $m$-circuit of $h$ on $S$ is a causal graph $(\sV, U, \sF, \succ)$ with $m$ components that satisfies:  
\textbf{(1)} $U$ is $S$-valued; \textbf{(2)} Hidden variables are real-valued; \textbf{(3)} There exists a path from $U$ to a variable $\vv_\tout \in \sV$ such that $\vv_\tout^\sF(S) = h(S)$.
\end{defn}
For any $h$ and $m \in \Z^+$, an $m$-circuit of $h$ exists by its trivial causal model. Since we do not require minimality, $m$-circuits are not unique: a joint change of bases to any $\vv_k \in \sV \setminus \vv_\tout$ and $f_k$; or, adding a new variable that does not contribute to $\vv_\tout$ results in a new causal model that preserves faithfulness to $h$. We further explore some of these properties in \secref{sec:ie}. 

\subsection{Representations}

\newterm{Representations} of a neural network are a sequence of activation spaces. Increasingly, representations have been understood to encode concepts which deep networks then iteratively refine~\citep{jastrzebski2018residual}. We formally view representations as abstractions of circuits and define them through \newterm{causal abstraction}~\citep{beckers_abstracting_2019,geiger2023causal}. A causal model $K^\star$ \newterm{abstracts} $K$ if there exists a surjective map from $K$'s variables to $K^\star$'s variables that preserves $K$'s causal relationships. Any map that admits this abstraction is an \newterm{alignment} from $K$ to $K^\star$ (\defref{def:abstraction}). Essentially, an alignment $\pi$ maps $K$'s ``low-level'' variables to $K^\star$'s ``high-level'' variables~\citep{rubenstein_causal_2017}. Like $m$-circuits, for a given low-level and high-level causal model pair, the set of alignment maps that exist between them are not unique.

\begin{defn}[Representations]\label{def:representation}
Let $K$ be an $m$-circuit of $h$ on a task $S$. An $L$-circuit that abstracts $K$ is an {$(L,\delta)$-representation} of $K$ if for each $\vv_k \in \sV$,
\begin{itemize}[nosep, leftmargin=0.3in]
\item $\parents(\vv_{k+1}) = \{\vv_k\}$, $\parents(\vv_1) = \{U\}$, and $\vv_L = \vv_\tout$.
\item For each $1 \leq k \leq L$, there exists linear maps $A_k: \R^{n_k} \to \R^{|\Sigma|}, B_k: \R^{|\Sigma|} \to \R^{n_k}$ where $\norm{A_k}_\op, \norm{B_k}_\op \leq 1$,
\begin{equation}\label{eq:s2n}
\textcolor{BrickRed}{\lVert{ A_k\vv_k^\sF - h  }\rVert} < \delta\qquad\text{and}\qquad \textcolor{NavyBlue}{\lVert{ B_kh - \vv_k^\sF  }\rVert} < \delta
\end{equation}
for some norm (to be specified).
\end{itemize}
\end{defn}

Representations have two crucial properties: they compress the complex causal model of $K$ into a single chain of length $L$; and, they have high \textcolor{BrickRed}{signal} and low \textcolor{NavyBlue}{noise}. By \oneeqref{eq:s2n}, each layer's hidden representation can identify $K$'s output behavior (independent of its children). Yet, the representations at each layer are simple enough that outputs of $h$ can identify their value (independent of its ancestors). In MI, this latter constraint is often used as a search criterion to localize a network's circuit~\citep{belrose2025eliciting}. For $h$, its set possible of circuits explode combinatorially with depth, whereas the set of possible representations only scale quadratically. Therefore, compression through abstract representation gives us a tractable staging ground to compare circuits.

\subsection{Interpretations}
\newterm{Interpretations} are symbolic, human-understandable descriptions of $h$ that faithfully captures its functional behavior on a task. We formalize them as abstractions.
\begin{defn}\label{def:interpretation-informal}
Fix a task $S$. A causal model $A \triangleq (\sV^\star, U, \sF^\star, \succ)$ that abstracts a circuit $K \triangleq (\sV, U, \sF, \succ)$ is an $\eta$-faithful interpretation if $A$'s output approximates $K$'s with error at most $\eta$ across inputs in $S$. 
\end{defn}
Notably, our construction does not assume there is a distance metric over $A$'s variables nor do we specify the values of these variables. This generality aligns our framework with most MI literature where $\sV^\star$ could take on symbolic values~\citep[\textit{inter alia}]{hanna_how_2023,zhong_clock_2023}. 

For any circuit $K$, an interpretation always exists through the \newterm{trivial interpretation}: $K$ itself is an interpretation of $K$ (with the identity abstraction; albeit, this is not very useful). Since interpretations depend on a fixed level of abstraction ($\sV^\star$), they must \emph{not} be unique: by varying $\sV^\star$ and $\pi$ we can produce different interpretations.\footnote{To see more details of this construction, refer to \defref{def:abstraction} and \oneeqref{eq:inter-consistency}} As a corollary, circuits are \emph{not} identifiable by their interpretations. This seems problematic, but in \appref{app:abs-dependency} we demonstrate that abstraction-dependent interpretations are necessary for a non-trivial notion of interpretive equivalence. 

\vspace{-0.3cm}
\section{Interpretive Equivalence through Implementation Equivalence}\label{sec:ie}
\vspace{-0.3cm}
\cite{meloux2025everything} shows that there exists a many-to-many relationship between circuits and interpretations. This implies that comparing individual circuits and interpretations is ill-defined. Thus, we propose that two interpretations are (approximately) equivalent if and only if their implementations are (approximately) equivalent. By examining families of circuits instead of individual ones, we can effectively quotient out the many-to-one mapping from circuits to interpretations. 

To start, we define \newterm{implementations}. Let $K \triangleq (\sV , U, \sF, \succ)$ be an $m$-circuit of $h$ on some task $S$. Let $A \triangleq (\sV^\star, U, \sF^\star, \succ)$ be an $\eta$-faithful interpretation of $K$ through an alignment $\pi : \powerset(\sV) \to \sV^\star$ (see \defref{def:abstraction}). Notice that for fixed $A, \pi$, by carefully varying $\sF$ we could construct many different circuits (and computation graphs) over $(\sV, U)$ that all abstract to the same interpretation $A$ under $\pi$. \textbf{Concretely, in our framework, $\sV, U$ defines a neural architecture and $\sF$ corresponds to many weight configurations that result in the same interpretation $A$ under $\pi$.} We then define $A$'s implementations as \textbf{the set of all admissible $\sF$ under $A, \pi$.} This construction mirrors observations that implementations are diffuse and plentiful in the weight space~\citep{lubana_mechanistic_2023,zhong_clock_2023,gupta2024interpbench}. Precisely,

\begin{wrapfigure}{R}{0.4\textwidth}
\centering
\includegraphics[width=0.38\textwidth]{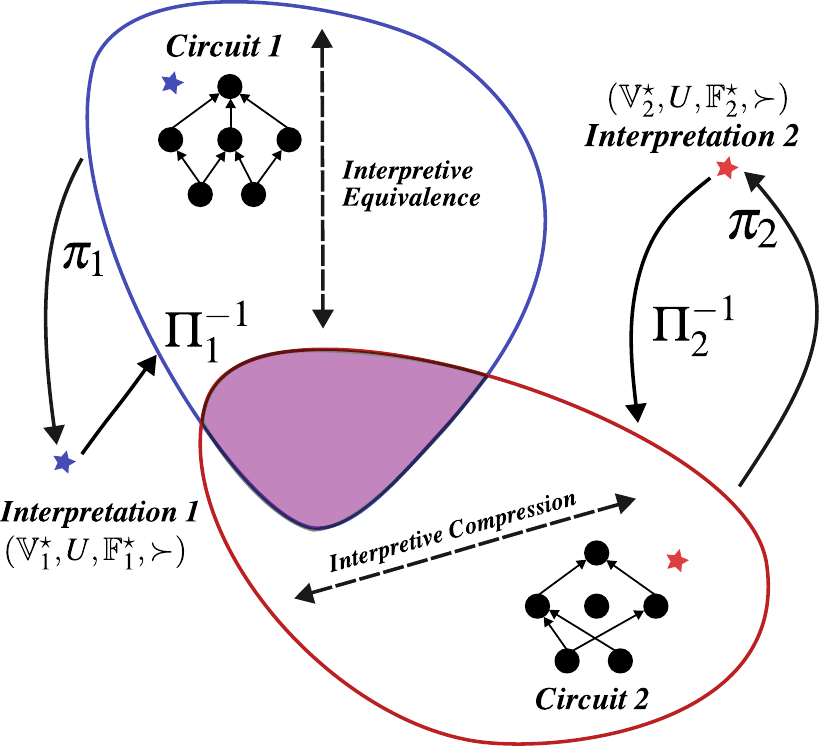}
\vspace{-0.02in}
\captionsetup{labelsep=none}
\caption{}
\vspace{-0.1in}
\label{fig:equiv-compress}
\end{wrapfigure}

\begin{defn}[Implementation]\label{def:implementation}
Let $\Pi$ be a class of alignments. $F$ is an implementation of $A$ if there exists $\pi \in \Pi$ such that $A$ is an $\eta$-faithful interpretation of $(\sV, U, F, \prec_F)$ under $\pi$.
\end{defn}

By a slight abuse of notation, we denote $\Pi^{-1}(A)$ as the set of all implementations of $A$ under alignments $\Pi$. Then, a non-empty intersection between $\Pi_1^{-1}$ and $\Pi_2^{-1}$ directly corresponds to \citeposs{meloux2025everything} non-identifiability result. 
We assume that any circuit comes with a metric attached to its variables, i.e. $d : \sV \times \sV \to \R^{\geq 0}$. This is reasonable, as a circuit's variables are real-valued (\defref{def:circuit-informal}). $d$ naturally induces a pseudometric over $\Pi^{-1}(A)$, where for $F , \tilde F \in \Pi^{-1}(A)$ 
\begin{equation}\label{eq:dist}
d(F, \tilde F) \triangleq d( \sV^{F}, \sV^{\tilde F} ) = d(\sV^F(S), \sV^{\tilde F}(S)).
\end{equation}
Given two sets of implementations over $(\sV, U, \cdot, \cdot)$ we can leverage $d$ to quantify (1) the Hausdorff distance between them; and, (2) their diameters. We call the former \newterm{interpretive equivalence}
and the latter \newterm{interpretive compression} (see \figref{fig:equiv-compress}). More precisely,

\begin{defn}\label{def:interp-equivalence}
For $i \in \{1,2\}$, let $A_i \triangleq (\sV_i^\star, U, \cdot, \cdot)$ be a $\eta_i$-faithful interpretation of $K \triangleq (\sV, U, \cdot, \cdot)$ and $\Pi_i$ be a class of alignments that map from $\powerset(\sV) \to \sV_i^\star$. Then, $A_1, A_2$ are \newterm{$\eps$-approximate interpretively equivalent} if 
\begin{equation}\label{eq:interp-equivalence}
\interpd(A_1, A_2) \triangleq \Haus\left(\Pi_1^{-1}(A_1), \Pi_2^{-1}(A_2)\right) < \eps,
\end{equation}
where $\Haus$ is the Hausdorff distance\footnote{The Hausdorff distance between two sets $A,B$ is the greatest distance a point from $A,B$ needs to travel to reach $B,A$, respectively: $\tilde d(a, B) \triangleq \inf_{b \in B} d(a,b)$, $\Haus(A,B) \triangleq \max(\sup_{a \in A} \tilde d(a,B), \sup_{b \in B} \tilde d(b,A))$.} defined by $d$ (see \oneeqref{eq:dist}).
\end{defn}

\begin{defn}\label{def:interp-compression}
Let $A$ be an $\eta$-faithful interpretation of $K$ and $\Pi$ be a class of alignments. 
Then, the \newterm{interpretive compression} of $A$ is 
\begin{equation}\label{eq:interp-compression}
\kappa(A, K, \Pi) \triangleq  \diam(\Pi^{-1}(A)) = \sup_{F, F' \in \Pi^{-1}(A)} d(F, F').
\end{equation}
\end{defn}
\vspace{-0.4cm}
Interpretive equivalence and compression are tightly coupled. To explore their relationship, let us consider two extremes. On one hand, suppose $A = K$ and $|\Pi| = 1$. Then, there exists exactly one implementation of $A$ under $\Pi$: $K$. This implies that there is no interpretive compression. Further, any interpretation equivalent to $A$ must admit the same exact same weight configuration as $K$. This effectively creates a bijective mapping between model weights and interpretations. On the other hand, suppose that $A$ is the trivial causal model of $h$, and let $\Pi$ be all alignments. Since $A$ conveys no information about the computational process of $K$ and $\Pi$ places no constraint on alignment, any circuit by \defref{def:circuit-informal} must be an implementation of $A$. In fact,~\cite{sutter2025nonlinear} show under mild assumptions that this phenomenon persists as long as $\Pi$ is the set of all alignments. And so, in this case, interpretive compression is maximal. These examples illustrate a duality between equivalence and compression: while more compression shrinks and simplifies the set of possible interpretations, it simultaneously enlarges the set of implementations and complicates interpretive equivalence. 

Interpretive equivalence, in \defref{def:interp-equivalence}, requires that both interpretations originate from the same circuit $K$. This constraint can be relaxed by augmenting $\Pi_i$ such that $(\pi: \powerset(\sV_1) \to \sV_i^\star) \in \Pi_i$. Through $\Pi_i^{-1}$, we only consider implementations of $A_1,A_2$ on the same neural architecture $\sV_1$. This creates asymmetry in choosing which circuit to use for comparison ($\sV_1$ or $\sV_2$). Our intuition is that we should select the circuit of least compression that admits at least one implementation for both interpretations.\footnote{Attached to each $\sV_1,\sV_2$ are distance metrics $d_1,d_2$. Alternatively, one could also choose the circuit where $d_i$ is most convenient to compute.} Our bounds in \secref{sec:impl-repre} justify this principle. 

We describe critical properties of interpretive equivalence and compression in \appref{app:properties}.

\section{Representational Similarity and Interpretive Equivalence}\label{sec:impl-repre}
Using interpretive compression and representation similarity, we now prove both upper and lower bounds on interpretive equivalence (\mainresultref{thm:sufficient} and \mainresultref{thm:necessary}, respectively). The proofs and formalisms are deferred to \appref{app:repr-proof}. Throughout this section, we consider the most general setup: for $i \in \{1,2\}$, let $K_i$ be a circuit of a model $h_i$ over a shared task $S$. Assume each circuit admits an $(L_i, \delta_i)$-representation $R_i$ through an alignment map $\rho_i$. First, we define a notion of distance between these two representations. 

\begin{defn}[Representation Similarity]
Let $H_i$ be the concatenation of all hidden variables in $R_i$. Then, the representation similarity of $R_1,R_2$, $\reprd(R_1,R_2)$, is the bidirectional linear approximation error between $H_1, H_2$ (formally, \defref{def:repr-sim}). 
\end{defn}
Defining similarity with a strict linear approximation is not necessary to develop our theory, but offers attractive computational tradeoffs. This is because we can formulate representation similarity as multi-target linear regression for which there are sharp approximations with small sample complexity. Alternatively, $\reprd$ could be defined with an arbitrary kernel~\citep{kornblith_similarity_2019}. We also make the following assumptions: \textbf{(1)} $\rho_i$ is Lipschitz continuous with Lipschitz constant $\Lip(\rho_i)$. \textbf{(2)} There exists $\Delta > 0$ such that $\norm{h_1 - h_2} < \Delta$ over $S$. \textbf{(3)} The implementation distance $d$ (see \oneeqref{eq:dist}) is induced by a fixed \newterm{distillation} function $\Phi$: for implementations $F, \tilde F$ over the same variables $\sV$, $d(F,\tilde F) \triangleq \lVert{\Phi \sV^{F} - \Phi \sV^{\tilde F}}\rVert$. \textbf{(4)} There is a 1-Lipschitz map $\Psi$ such that $\lVert \Psi H_1 - \Psi H_2 \rVert \leq \reprd(R_1,R_2)$ and $\lVert \Phi \sV^{\sF} - \Psi H\rVert < \omega$.

Assumption 1 stabilizes representations such that a small change in circuitry implies a proportional representation perturbation. In practice, this is reasonable since representations are usually distilled from neural networks via linear projection or by averaging the activations of several neural components~\citep{sucholutsky2023getting}. Assumption 2 guarantees that functional differences between our target models are bounded, otherwise their interpretive \emph{inequivalence} is trivial. Assumptions 3-4 state that our implementation metric $d$ only measures distance using a set of observable components. Further, these components are recoverable from representations (up to $\omega$). In this way, representations can serve as a surrogate for distances between circuits. 

For $i \in \{1,2\}$, suppose that $A_i$ is an $\eta_i$-faithful interpretation of $K_i$. We first show that representation similarity between $K_1, K_2$ upperbounds interpretive equivalence.
\begin{mainthm}\label{thm:sufficient}
Let $\Pi$ and $\Pi_\star$ be alignment classes that map $K_1$'s variables into $A_1, A_2$'s variables, respectively. Suppose there exists $F^\star \in \Pi_\star^{-1}(A_2)$ such that its representation under $\rho_1$ is an $(L_1, \delta_1)$-representation. Then, the approximate interpretive equivalence of $A_1, A_2$ under alignments $\Pi^{-1}, \Pi_\star^{-1}$ is
\begin{equation}\label{eq:sufficient}
\interpd(A_1, A_2) \leq \underbrace{\kappa(A_1, K_1, \Pi) + \kappa(A_2, K_1, \Pi_\star)}_{\mathrm{(a)}} + \underbrace{2\omega + \reprd(R_1, R_2) + \delta_1 + \delta_2 + \Delta}_{\mathrm{(b)}}.
\end{equation}
\end{mainthm}
\vspace{-0.4cm}

\oneeqref{eq:sufficient} explicits two dependencies: \newterm{(a)} measures interpretive compression of $A_1, A_2$ on $K_1$. When compression is low, circuits retain more information about interpretive differences. And since representations distill circuits, it is intuitive that representation similarity dominates the upperbound in this regime. In contrast, when compression is high, our examples in \secref{sec:ie} illustrate that when circuits become uninformative, so should representations and their similarities; \newterm{(b)} jointly measures the quality of representations $R_1, R_2$ and their differences. If $\delta_1,\delta_2 \gg 1$, representations lose meaningful linear structure even though they sufficiently abstract implementations of $A_1,A_2$. This bound exposes a tension between the geometric richness of representations and our computational approach to their similarities. For example, representations lying on complex manifolds may not be well-described by our linear formulation of representation similarity~\citep{ansuini_intrinsic_2019,pimentel2020information}. Thus, we hypothesize representations and their similarity criterion need to be simultaneously broadened to achieve more generality. We leave the implications of this for future work. Now, the lowerbound:

\begin{mainthm}\label{thm:necessary}
Let $\Pi$ and $\Pi_\star$ be an alignment that map $K_1$'s variables onto $A_1,A_2$'s variables. Suppose $A_1,A_2$ are $\eps$-approximate interpretively equivalent under the alignment classes $\Pi_\star, \Pi$. Then, 
\begin{align}\label{eq:necessary}
\eps  &\geq \frac{\reprd(R_1, R_2) - (\delta_1 + \delta_2 + \Delta)}{\Lip(\rho_1) + 1}.
\end{align}
\end{mainthm}
The terms in \oneeqref{eq:necessary} mirror the dynamics of \oneeqref{eq:sufficient}. If the implementation sets are $\eps$-close, then there cannot be too much discrepancy between the observed representations (see the numerator), unless it can be explained away by the slack terms $(\delta_1+\delta_2+\Delta)$. Along these lines, $\Lip(\rho_1)$ (in the denominator) captures the intuition that when the mapping from circuits to representations is unstable, small implementation-level gaps can lead to larger representational differences. Beyond interpretive equivalence, \mainresultref{thm:necessary} has implications for steering~\citep{singh2024representation}. When the identifiability of representations is low ($\Lip(\rho_1) \gg 1$ and $\reprd(R_1,R_2) \ll 1$), steering interventions are less likely to induce interpretive change, i.e. significant alterations to the representation space may see little change in model interpretation. 
In \mainresultref{thm:congruity}, we show how this directly relates to \Congruity. For $i \in \{1,2\}$, let us assume an arbitrary distribution over $\Pi_i^{-1}$ and denote $\P \triangleq \P_1 \otimes \P_1\otimes \P_2\otimes \P_2$ the product distribution.
\begin{mainthm}\label{thm:congruity}
Let $p$ be the expected value of {\ReprDist} in \algref{alg:congruence}, $F_1, F_2 \overset{\text{iid}}{\sim} \Pi_1^{-1}(A_1)$, and $F_3 \sim \Pi_2^{-1}(A_2)$ then 
\begin{equation}
p \geq 1 - \inf_{\eps > 0} \P[\reprd( F_1, F_2 ) > \kappa(A_1, K_1, \Pi_1) + \eps] - \P[\reprd(F_1, F_3) < \interpd(A_1, A_2) - \eps].
\end{equation}
\end{mainthm}
This lowerbound demonstrates that {\ReprDist} (and as a corollary, {\Congruity}; see \corref{cor:congruity}) accounts for interpretive compression and equivalence. Concretely, it predicts that, in expectation,  {\ReprDist} will be large when $\Pi_1^{-1} \neq \Pi_2^{-1}$ except in two cases: \textbf{(a)} within-interpretation dispersion is high: the representations of two implementations of $A_1$ can differ more than their interpretive compression; or \textbf{(b)} cross-interpretation collision: an implementation of $A_2$ lies close in representation to $A_1$ despite $\interpd(A_1,A_2)\gg1$. We discuss these results further in \appref{app:compute-ie} and also analyze the sample complexity of \Congruity.

\section{Conclusion}
In this paper, we define and study the problem of interpretive equivalence: detecting whether two models share the same mechanistic interpretation without interpreting them. We contribute an algorithm to estimate equivalence and demonstrate its use on toy and pre-trained language models. Then, we provide a theoretical foundation to ground and explain our algorithm. Our framework and results lay a foundation for the development of more rigorous evaluation methods in MI and automated, generalizable interpretation discovery methods. 


\section*{Acknowledgements}
Alan is supported by the National Science Foundation Graduate Research Fellowship Program under Grant No. DGE2140739. Alan thanks Büșra Asan and Andrew Koulogeorge for helpful feedback and discussion of early drafts of the work.

\bibliography{iclr2026_conference}
\bibliographystyle{acl_natbib}


\appendix



\section{Questions and Clarifications}
\begin{question}
It seems that a prerequisite of \Congruity~is circuit discovery. Given that circuit discovery is \textsc{NP-Hard}, how can this algorithm be efficient? 
\end{question}
\begin{answer}
A \newterm{circuit} is a minimal subset of the computation graph that explains the model's functional behavior. Notably, this minimality requirement is what makes circuit discovery hard. In~\Congruity, we do not require a circuit in a strict sense. Instead, the practitioner needs to identify \newterm{individual} components that are \newterm{unimportant}, i.e. removing it does not affect functional behavior. This is more relaxed than performing circuit discovery and applying the complement. For a model with $m$ components, existing component scoring methods such as activation/attribution patching can effectively identify unimportant component sets in $O(1)$ time~\citep{conmy_towards_2023,nanda_attribution_2023}. 
\end{answer}

\begin{question}
In \secref{sec:ie}, it is argued that there could be many interpretations for a given circuit. How do we know which (or at what level) interpretations are being compared through \Congruity, if we do not interpret the models first?
\end{question}
\begin{answer}
The level at which interpretations are being compared in \Congruity~depends wholly on the interventions being performed in \GetImpl. Given an interpretation $A$ of some circuit $K$, consider the set of $A$'s implementations under some arbitrary alignment class $\Pi$. We could try and enumerate the set $\Pi^{-1}(A)$ naively. In most cases, this is extremely difficult because $\Pi^{-1}(A)$ could have complex geometry in the weight space~\citep{lubana_mechanistic_2023}. We could identify this set by dually enumerating interventions. Concretely, fix some $F \in \Pi^{-1}(A)$. Construct a set of interventions $\Gamma$ such that for each $\tilde F \in \Pi^{-1}(A)$, there is $\gamma \in \Gamma$ where $\gamma(F) = \tilde F$. In many cases, $\Gamma$ is more intuitive than $\Pi^{-1}(A)$~\citep{geiger24alignments,park_linear_2024,gupta2024interpbench}. Alternatively, we could directly define $\Pi^{-1}(A)$ with $\Gamma$. This strategy is employed in \GetImpl to implicitly define the set of interpretations being compared. To our knowledge, there is surprisingly little literature on the nature of these interventions and we believe that our insights in \twosecrefs{sec:congruence}{sec:ie} suggest this may be a fruitful direction for future work. 
\end{answer}

\begin{question}
The tasks presented in \secref{sec:experiments} are quite simple. Would linear representation alignment as a similarity metric between representations in {\Congruity} still be effective for more complex tasks?
\end{question}
\begin{answer}
We choose linear representation alignment because of the linear representation hypothesis~\citep{park_linear_2024}. However, this does not restrict the generality of our framework. This is because the choice of alignment map from circuits to representations, $\rho$, is unrestricted. One could indirectly construct more complex $\reprd$ by indirectly increasing the complexity of $\rho$. In this case, $\rho$ can be thought of as a constant feature map of arbitrary complexity. 

\end{answer}

\begin{question}
How does the proposed framework allow practitioners to translate interpretive equivalence into actual interpretations?
\end{question}
\begin{answer}
Interpretive equivalence alone is not sufficient to identify interpretations. We leave the formal exploration of this for future work. But, as discussed in \twoexref{ex:simple-models}{ex:simple-tasks}, interpretive equivalence potentially simplifies the interpretation workflow. 
\end{answer}

\begin{question}
How would the theoretical framework in \twosecrefs{sec:causal}{sec:ie} generalize to different notions of causal abstraction?
\end{question}
\begin{answer}
The difference between various causal abstraction frameworks is their definition of an alignment map $\pi$ (see \defref{def:abstraction}). For example, \newterm{bijective translations} (in contrast to \newterm{constructive abstractions}, the notion of abstraction we use throughout the main text) require that the alignment map be bijective. Our framework can easily accommodate these different notions by simply tightening or loosening the set of permissible alignments $\Pi$. Further, assuming all else equal, as $\Pi$ increases or decreases in size, interpretive compression necessarily increases and decreases, respectively. In this way, the tightness of \twomainresultrefs{thm:sufficient}{thm:necessary} adjusts to the chosen notion of causal abstraction. On the other hand, \twomainresultrefs{thm:sufficient}{thm:necessary} do not rely on structural properties of $\Pi$. Potential future work could significantly tighten these bounds by considering sharp restrictions over the allowed alignments.
\end{answer}

\begin{question}
Why is it that in \twomainresultrefs{thm:sufficient}{thm:necessary}, there is no dependence on the $\eta_i$-faithfulness of the interpretations that we are comparing? 
\end{question}
\begin{answer}
Faithfulness is implicitly encoded into interpretive compression which appears in both \twomainresultrefs{thm:sufficient}{thm:necessary}. Let $A$ be an $\eta$-faithful interpretation of $K$. Fix some alignment class $\pi$ and consider $\Pi^{-1}(A)$. As $\eta \to \infty$, the size of $\Pi^{-1}(A)$ grows monotonically. To see this, let $\eps > 0$ be arbitrary. If $A$ is an $\eta$-interpretation of $K$, then $A$ must also be an $(\eta + \eps)$-interpretation of $K$. In other words, if the faithfulness constraint is relaxed, then $A$ could be a faithful interpretation for any circuit.  This implies that any notion of interpretive equivalence must be trivial since every interpretation corresponds to every circuit. 
\end{answer}


\section{Related Literature}\label{sec:related}
There has been extensive study in interpreting neural networks (specifically, language models). With respect to our paper, they can be organized into the following three buckets.

\textbf{Mechanistic Interpretability.} Mechanistic Interpretability (MI) can be broadly broken into two distinct phases: first, identifying a minimal subset of the model's computational graph that is responsible for a specific behavior---this process is also referred to as \textit{circuit
discovery}~\citep{olah2020zoom,olah2020an,conmy_towards_2023,bhaskar_finding_2024,hanna_have_2024}; second, assigning human-interpretable explanations to each of the extracted components---this process is generally referred to as \textit{mechanistic
interpretability}~\citep{chan_causal_2022,wang_interpretability_2022,zhong_clock_2023,hanna_how_2023,merullo2024circuit}. This paper focuses on the latter process; thus, when we invoke the term mechanistic interpretability, we are referring specifically to this second phase. For completeness, we briefly review circuit discovery as well.

A model can be seen as a computational graph~\citep{elhage_mathematical_2021,olsson2022context}: $G= (V,E)$ with vertices $V$ and edges $E\subset V\times V$. A circuit is then any binary function $f : E \to \{0,1\}$. The optimal circuit $f$ satisfies two properties: (1) it minimizes $\sum_{e \in E} f(e)$; and (2) when edges not in the circuit $e \in E, f(e) = 0$ are ablated,
the model's functional behavior remains unchanged. Although circuit discovery can be concretely formulated as an optimization problem, it is computationally intractable in its naive form~\citep{bhaskar_finding_2024,adolfi2025the}. As a result, many relaxations and heuristics have been developed~\citep{conmy_towards_2023,syed_attribution_2023,nanda_attribution_2023,hanna_have_2024,bhaskar_finding_2024}. Nevertheless, there exists a solution set that can be statistically verified~\citep{shi2024hypothesis}. The process of mechanistic interpretability, which we focus on in this paper, presents different challenges.
MI involves iteratively generating hypotheses about the interpretations for the interactions between different components,
then testing those hypotheses through carefully crafted
interventions~\citep[\textit{inter
alia}]{chan_causal_2022,geiger24alignments,meloux2025everything,sun_hyperdas_2025}.
MI methods can be further clustered into two approaches:
\textit{top-down} and \textit{bottom-up}. Top-down approaches start
by enumerating hypotheses about the possible algorithms the model
could be implementing. Then, they iterate through these hypotheses
and isolate those with the closest causal
alignment~\citep{wu_interpretability_2023,geiger24alignments,bereska2024mechanistic,vilas24ainner,sun_hyperdas_2025}.
On the other hand, bottom-up approaches are data-driven. They seek to
directly find algorithmic explanations of the model's mechanistic
behaviors by analyzing activations or attention
patterns~\citep{nanda_progress_2022,zhong_clock_2023,lee_mechanistic_2024,arditi_refusal_2024,nikankin2025arithmetic}.

\textbf{Casual Abstraction.} Causal
abstraction seeks to formally characterize when a symbolic
explanation faithfully explains a data-generating process (in our
  application, this data-generating process would be a
model)~\citep{pearl2009causality,peters2017elements,beckers_abstracting_2019,pmlr-v115-beckers20a,otsuka_equivalence_2022}.
Causal abstraction as defined in
Definition~\ref{def:abstraction}
seeks to align ``low-level'' models (neural networks) with
``high-level'' models (symbolic explanations). These theoretical
constructions have driven the development of many top-down
mechanistic interpretability methods such
as~\cite{wu_interpretability_2023,geiger24alignments,sun_hyperdas_2025}
which directly verify this condition. While causal abstraction
provides a necessary condition for a valid interpretation, recent
works such as~\cite{meloux2025everything} argue that it alone is
insufficient to fully characterize what constitutes a valid
interpretation. Practical procedures that stem from this theory also
fail to address this. This highlights the need for additional
theoretical frameworks to complement causal abstraction.

On the other hand, causal abstraction represents a \textit{hard}
equivalence: two models either \textit{are} or \textit{are not}
causal abstractions of each other. The binary nature of this
definition fails to capture our intuition that explanations can vary
in quality or completeness. Some interpretations may better capture
the model's behavior than others, or may only explain a subset of the
model's functionality. The hard equivalence of causal abstraction
makes it difficult to reason about these partial or imperfect
explanations. As a result, framemworks such
as~\cite{pmlr-v115-beckers20a,massidda_causal_2023} soften this
criterion by introducing distance metrics on the total settings of
the ``high-level'' model. In practice, these approaches face two main drawbacks:
\begin{enumerate}[nosep, leftmargin=0.3in]
  \item It is challenging to define meaningful distance metrics when
    high-level interpretations involve discrete symbols and symbolic
    reasoning.
  \item Comparing interpretations between models requires fully
    interpreting each model first, which forces a computationally
    expensive top-down analysis approach.
\end{enumerate}

Our framework addresses these limitations by focusing on
\textit{implementations} rather than interpretations directly. Since
neural network computations are fundamentally real-valued, we can
leverage natural distance metrics in their computational space.
Additionally, by analyzing families of implementations rather than
requiring complete interpretations, we can compare models'
interpretive similarity without the overhead of fully interpreting
each model first.

\textbf{Representation Similarity.} Representation similarity in neural
networks has emerged as a fundamental research area whcih addresses
how internal representations correspond across different models,
domains, and biological systems~\citep{kornblith_similarity_2019}. We provide a brief overview of techniques to measure representation similarity, but encourage the reader to refer to \cite{sucholutsky2023getting,klabunde} for a comprehensive survey of the techniques within this field.

Representations of deep neural networks have shown to exhibit an
array of powerful properties. They have given insights into training
dynamics and the generalization capabilities of models~\citep{tishby_information_2000,
xu_information-theoretic_2017,shwartz_ziv_compress_2024}
and also provide a tractable method to compare the inductive biases
between different models~\citep{wang2019understanding,skean2025layer}.
These representations have also shown to admit rich geometric properties~\citep{tulchinskii_intrinsic_2023}.
In this paper, we measure representation through a learned linear regression.
These techniques have been used extensively to compare different neural architecture and 
even human-language model similarity~\citep{toneva2019interpreting,muttenthaler2023improving}. 

\section{Experimental Details}\label{sec:exp-details}

In this section, we detail our experimental methods. We first review
constructs of RASP~\cite{weiss_thinking_2021}. Then, we demonstrate
that RASP provides an interface to craft interpretations, and through
methods like~\cite{geiger24alignments,gupta2024interpbench} we can
enumerate implementations of these interpretations. Lastly, we
describe our experimental hyperparameters.

\subsection{RASP Interpretations}

\textbf{Background on RASP Programs.} The \underline{R}estricted
\underline{A}ccess \underline{S}equence \underline{P}rogramming (RASP)
language is a functional programming model designed to capture the
computational behavior
of Transformer architdectures
\citep{weiss_thinking_2021}. RASP programs have shown use in
mechanistic interpretability both
as an effective benchmarking tool for
faithfulness~\citep{conmy_towards_2023,hanna_have_2024}
and as a method to develop ``inherently'' interpretable language
models~\citep{friedman_learning_2023}. Another line of work uses it (and other
similar methods) as a proof technique to
reason about the Transformer architecture's generalizability on a host of tasks
\citep{weiss_thinking_2021,merrill_saturated_2022,giannou_looped_2023}.
In this paper,
we focus on RASP's applications in interpretability.

RASP programs operate on two primary types of variables:
\textit{s-ops}, representing the input
sequence, and \textit{selectors}, corresponding to attention
matrices. These variables are
manipulated through two fundamental instructions: elementwise
operations and select-aggregate.
\textit{Elementwise operations} simulate computations performed by a
multilayer perceptron (MLP),
while \textit{select-aggregate} combines token-level operations, modeling the
functionality of attention heads.

Every RASP program is equipped with two global variables
\texttt{tokens} and \texttt{indices},
essentially primitive \textit{s-ops}. \texttt{tokens} maps strings
into their token representations:
\begin{center}
\begin{verbatim}token("code") = ["c", "o", "d", "e"]
indices("code") = [0, 1, 2, 3]
\end{verbatim}
\end{center}
On the other hand, \texttt{indices} map $n$-length strings into their
indices. That is,
a list of $[0, 1, \ldots, n-1]$. Elementwise operations can be
computed through composition. That
is,
\begin{verbatim}(3 * indices)("code") = [0, 3, 6, 9]
(sin(indices)) = [sin(0), sin(1), sin(2), sin(3)]
\end{verbatim}
Tokens and their indices can also be mixed through \textit{selection
matrices} which are represented
through the \textit{s-op} \textit{select}. This operations captures
the mechanism of the QK-matrix.
It takes as input two sequences $K, Q$, representing keys and queries
respectively,
and a Boolean predicate $p$ and returns a matrix $S$ of size $|K|
\times |Q|$ such that
$S_{ij} = p(K_j, Q_i)$. Then, the OV-circuit can be computed through
the \textit{select-aggregate}
operation, which performs an averaging over an arbitrary sequence
with respect to the
aforementioned selection matrix. For example,
\[
  \texttt{aggregate}\left(
    \begin{bmatrix}
      1 & 0 & 0 \\
      0 & 0 & 0 \\
      1 & 1 & 0
    \end{bmatrix},
    \begin{bmatrix}10 & 20 & 30
  \end{bmatrix}\right)
  =
  \begin{bmatrix}10 & 0 & 15
  \end{bmatrix}.
\]
The previous example is directly lifted from~\cite{lindner_tracr_2023}.

\textbf{Compiling RASP Programs.} The power of RASP programming lies
in its ability to translate
any RASP program into a Transformer, a process known as \textit{compilation}.
As described in \cite{lindner_tracr_2023}, this involves a two-stage approach.
First, a computational graph is constructed by tracing the
\textit{s-ops} in the program,
identifying how these operations interact with and modify the residual stream.
Elementwise operations are converted into MLP weights, and individual components
are heuristically assigned to Transformer layers.
For further details, we refer the reader to~\cite{lindner_tracr_2023}.

As observed by~\cite{lindner_tracr_2023}, this compilation through
``translation'' introduces
inefficiencies. Specifically, the heuristic layer-assignment of RASP
components results in
Transformers that often contain more layers than they need to have.
Moreover, since RASP
enforces the use of categorical sequences and hard attention (we only
  allow Boolean
predicates) it requires various \textit{s-ops} to lie orthogonal to
each other after
embedding as Transformer weights. As a result, this leads to a much
larger embedding dimension
that is usually observed in actual Transformers~\citep{elhage_toy_2022}.
Thus, \cite{lindner_tracr_2023} proposes to compress this dimension
through a learned
projection matrix. The caveat is that this transformation largely not
faithful to the original
program (measured through cosine similarity of the outputs at
individual layers).

\cite{friedman_learning_2023} takes a different approach, addressing
the inherent
difficulty of writing RASP programs. To overcome this challenge, the
authors propose
a method for directly learning RASP programs. This is achieved by
constraining the
space of learnable weights to those that compile into valid RASP
programs, ensuring
outputs with categorical variables and hard attention mechanisms.
Optimizing over this
constrained hypothesis class is performed through a continuous
relaxation using the
Gumbel distribution~\citep{jang_categorical_2017}.

\textbf{RASP Benchmarks.} \cite{thurnherr_tracrbench_2024} is a
dataset of RASP programs
that have been generated by GPT-4. It contains 121 RASP programs.
\cite{gupta2024interpbench}
provides 86 RASP programs and compiled Transformers. The compiled
Transformers are claimed
to be more realistic than Tracr compiled ones as instead of
performing compression using
a linear projection, they leverage \textit{strict interchange
intervention training}
essentially aligning the intervention effects of the compressed and
uncompressed model.
This is similar in vein to many existing techniques on causal abstraction
\cite{otsuka_equivalence_2022,zennaro_abstraction_2022,massidda_causal_2023}.
In our paper, we leverage the curated
dataset~\cite{gupta2024interpbench} to craft
and compose our interpretations.

\begin{figure}
  \centering
  \includegraphics[width=0.8\textwidth]{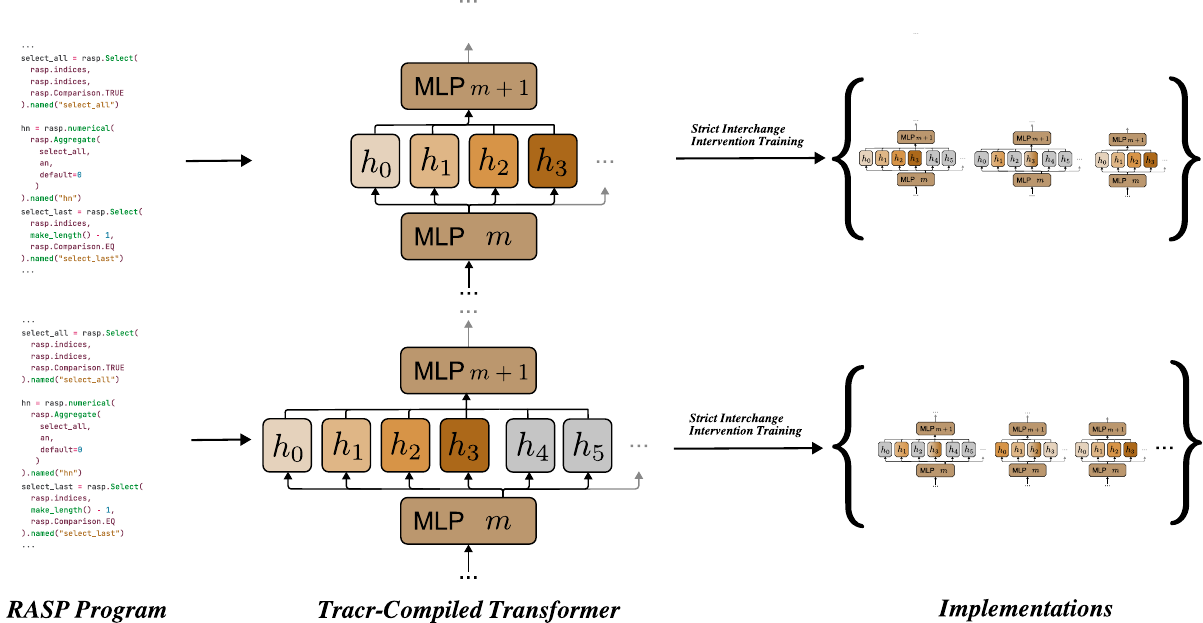}
  \caption{Pipeline for generating implementations for our
    interpretations (i.e. RASP language models). We first construct
    RASP programs, then using the procedure introduced
    in~\cite{lindner_tracr_2023}, we compile these programs into
    Transformer models. These Transformers exclusively have hard
    attention. Moreover, their architecture is minimal (containing only
      the necessary components to fully implement the given RASP
    program). We then apply the procedure introduced
    in~\cite{gupta2024interpbench} to translate these Tracr-Compiled
    Transformers into ``real'' Transformers: ones whose weight
    distribution matches those trained with stochastic gradient
  descent; these translated models also contain more }
  \label{fig:exp-setting}
\end{figure}

\subsection{Strict Interchange Intervention Training}
To evaluate our methods, we need a way to verifiably ellicit
different mechanisms
on the same task. Let us first fix some task. Then, we proceed with
the following
steps:
\begin{enumerate}[nosep, leftmargin=*]
  \item Using \cite{friedman_learning_2023}, we learn several different explicit
    Transformer programs (source of randomness). We can check that
    they are different
    by looking explicitly at the Transformer programs.
  \item Using \cite{gupta2024interpbench} and \cite{geiger24alignments} to get
    different mechanistic realizations of this abstract Transformer program.
\end{enumerate}
To generate different mechanistic instantiations of the same
interpretation across architectures, we use the following procedure:
First, we take a Tracr-Compiled Transformer model and initialize a
new random model with at least as many layers and attention heads.
Two models share the same interpretation if the Tracr-Compiled
transformer is a causal abstraction of our mechanistic instantiation.
We leverage this insight by softening \defref{def:abstraction} and incorporating it
directly into our objective function. For a detailed treatment of
this approach, we refer readers
to~\cite{geiger24alignments,gupta2024interpbench,sun_hyperdas_2025}.

To ensure diversity in our implementations, we vary the architecture
hyperparameters significantly: models contain between 2-6 layers, 2-8
attention heads, and embedding dimensions ranging from 32 to 2048.
This creates a rich set of instantiations with widely varying model
capacities. As a result, many mechanisms in these larger models may
not contribute to the core interpretations, leading to substantial
interpretive compression.

\subsection{Interpretations for Permutation Detection}

To implement this task, we proceed with the following six interpretations.
As discussed in \secref{sec:calibration}, they can be roughly
divided into two groups: sort- and counting-based.

\textbf{Interpretation 1.} \textit{Sort the sequence}; compute the difference between each element and the next one; (3) Check if all elements of the sequence are equal.

\textbf{Interpretation 2.} \textit{Sort the sequence} in descending order; increment each element by its index; check if all elements of the sequence are equal.

\textbf{Interpretation 3.} \textit{Sort the sequence}; interleave the list with the same list in reverse order; sum each number with the number next to it; (4) Check if all elements of the list are equal.

\textbf{Interpretation 4.} \textit{Sort the sequence}; check if the list contains alternating even and odd elements.

\textbf{Interpretation 5.} Check if at least two elements in the list are equal; sum each element with the next one in the list; check if all elements of the list are equal.

\textbf{Interpretation 6.} Replace each element with the number of elements less than it in the sequence; check if at least two numbers are the same.

For each of the subroutines of these interpretations,~\cite{gupta2024interpbench} implements RASP programs for them. Thus, we simply compose them together to create the resulting interpretations.

\subsection{Generating Implementations for IOI}
Herein, we describe how we generate implementations for the IOI task. For each model, we first identify a subset of the model’s attention heads that drive functional behavior. We do this through causal mediation analysis (which is well-documented in~\cite{wang_interpretability_2022} and \cite{tigges_llm_2024}). For example, in GPT2-small this corresponds to attention heads (0.1, 0.10, 2.2, 4.11, 5.5, 6.9, etc., formatted as \mono{[layer index].[head index]}). The complement of this set are then components that are ``unused'' (see \secref{sec:congruence}). 

For any input like \textbf{(a)} we construct a counterfactual input like \textbf{(b)} see below:
\begin{enumerate}[nosep]
\item[(a)] \mono{When John went to the store with Mary, John gave a bottle of milk
to}
\item[(b)] \mono{When Mary went to the store with John, Mary gave a bottle of milk to}
\end{enumerate}
Then, we generate implementations by ablating the unused components. This is done by running the model on inputs like \textbf{(a)} and then for attention head in the ablation subset we patch in the output of the attention
head as if it had been run on inputs like \textbf{(b)}.

\subsection{Function Vectors and Parts-of-Speech Identification}

Function vectors have been found to drive in-context learning behavior~\citep{todd2024function}. We leverage this concept to find the circuit that is responsible for in-context parts-of-speech identification. We use the Penn TreeBank dataset and consider the POS subtask. We sample $n = 1000$ points. 

We create counterfactual inputs by shuffling the in-context labels. For example, 
\begin{center}
Clean Input:  \mono{tree:noun, run:verb, quickly:adverb, ire:} \\ 
Corrupted Input: \mono{tree:adverb, run:noun, quickly:verb, ire:}
\end{center} 
Then, we first run the model on the clean input and store all attention head output activations on the token ``ire.'' We then run the model on the corrupted input and patch in the stored attention head activations from the ``ire'' token. Finally, we compute the difference in logit difference loss of the generated next tokens before/after this patching operation. The results for each attention head are shown in \figref{fig:patching-pos}. 
It seems that attention heads in the early/middle layers are most important for POS (0.1, 0.4, 0.6, 4.11, 5.1, 5.5, 6.8, 6.9, 6.10, 7.11).

\begin{figure}
\centering
\includegraphics[width=0.5\textwidth]{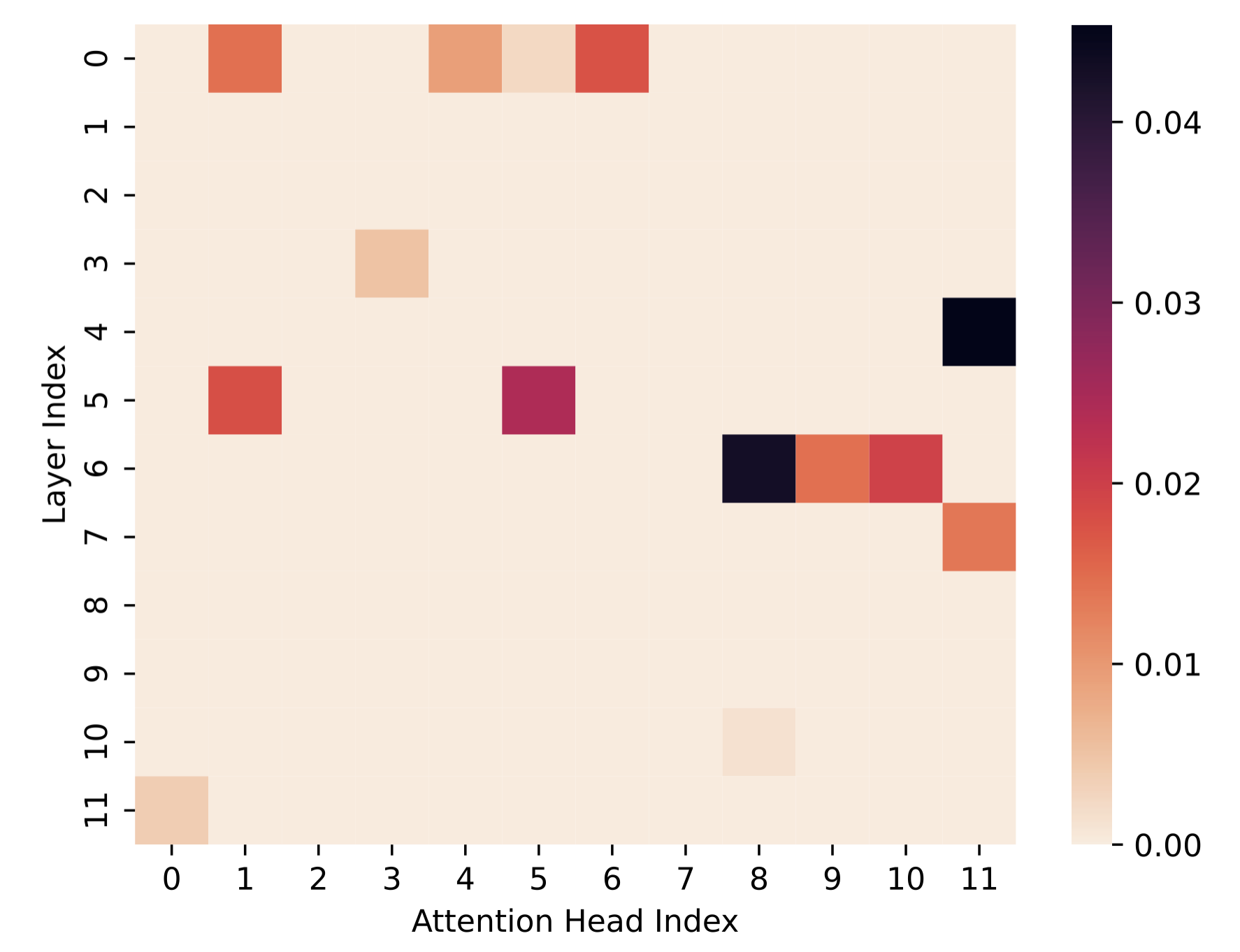}
\caption{Activation patching results for each attention head. The coloring indicates the \% of performance this individual attention head contributes to POS performance. }
\label{fig:patching-pos}
\end{figure}

Then, we compute the function vector by 
\begin{equation}
    \frac{1}{|A|} \sum_{a \in A} \sum_{k=1}^n a(x_k),
\end{equation}
where $A$ is the set of attention heads that we have deemed important and $x_k$ is our dataset. Essentially, we are averaging the activations across all important attention heads.

\section{Formal Definitions}\label{app:glossary}

We provide a glossary (see Table~\ref{tab:glossary} and Table~\ref{tab:notations} for a summary) for our technical terms and notations in the main text along with their precise, formal definitions.

{\footnotesize
\begin{longtable}{lp{0.45\linewidth}r}
\caption{Glossary of terms.}
\label{tab:glossary}\\[-0.2cm]
\toprule 
\textbf{Term} & \textbf{Intuitive Explanation} & \textbf{Formal Definition} \\ \midrule 
Causal/Computation Graph & A graph that describes a model's computational dependencies & \defref{def:causal-model}\\ 
Variables & Nodes in a model's computation graph that store latent computation results & \defref{def:causal-model} \\
Circuit &  A subset of the model's computational graph that on its own can fully recreate the model's input-output behavior &  \defref{def:circuit} \\ 
Mechanistic Explanation &  A symbolic algorithm, we can understand, that explains the model's computation & N/A\\ 
Causal Abstraction & Mapping from low-level variables to higher-level ones that preserve low-level causal relationships & \defref{def:abstraction} \\ 
Interpretation & A causal graph much smaller than the model's computation graph that explains the model's computation graph & \defref{def:interpretation}\\ 
Representations & A causal graph that is a chain that abstracts a model's circuit & \defref{def:representation} \\ 
Alignment & A mapping between two causal graphs that preserves meaningful causal relations between variables & \defref{def:abstraction} \\ 
Intervention & A modification to the model's computation graph & \defref{def:intervention} \\ 
Alignment Class & A set of alignments that map to the same interpretation & \defref{def:implementation} \\ 
Implementation & Sets of circuits that share the same interpretation & \defref{def:implementation} \\ 
Interpretive Equivalence & Measure of distance between sets of implementations & \defref{def:interp-equivalence} \\ 
Interpretive Compression & Measure of diameter of a set of implementations & \defref{def:interp-compression} \\
Congruity & Approximation for interpretive equivalence & \defref{def:congruity} \\ 
Representation Similarity & Extent to which one set of representations can be linearly transformed into the other and vice versa & \defref{def:repr-sim} \\ 
\bottomrule 
\end{longtable}
}

{\footnotesize
\begin{longtable}{lr}
\caption{Notations used throughout our formal treatment. This is partially adapted from both~\cite{geiger2023causal} and~\cite{ravfogel2025gumbel}.}
\label{tab:notations}\\
\toprule 
\textbf{Symbol} & \textbf{Meaning} \\  \midrule 
$\Sigma$ & An alphabet \\ 
$\Sigma^\star$ & Set of all finite strings constructed from $\Sigma$ \\ 
$S$ & A subset of $\Sigma^\star$ \\
$A$ & An interpretation \\ 
$K$ & A circuit \\ 
$R$ & A representation \\ 
$\sV$ & Set of variables in a causal model \\ 
$U$ & Input variable to causal model \\ 
$\sF$ & Functions in causal model that determine variable values \\ 
$\succ$ & Partial ordering over variables of causal model induced by $\sF$ \\ 
$\sV^\sF(u)$ & Solution of all variables to input $u$ \\ 
$\vv_k^{\sF}(u)$ & Solution of $\vv_k$ to input $u$ \\ 
$\pi$ & An alignment between variables of two causal models \\ 
$\Pi$ & A class of alignments \\ 
$\Pi^{-1}(A)$ & The set of implementations of $A$ induced by alignments $\Pi$ \\ 
$\interpd$ & Interpretive equivalence \\ 
$\Haus$ & Hausdorff distance \\ 
$\kappa(A, K, \Pi)$ & Interpretive compression \\ 
$\reprd$ & Representation similarity \\ 
$\Lip(f)$ & Lipschitz constant of $f$ \\
$\overset{\iid}{\sim}$ & Sample i.i.d from some distribution \\ 
\bottomrule
\end{longtable}
}

\begin{defn}[Circuit]\label{def:circuit}
An $m$-circuit of $h$ on $S$ is a causal graph with $m$ components that satisfies four properties:  
\begin{enumerate}[nosep]
\item Input variable is $S$-valued: $U \in S$
\item Hidden variables are real-valued: for each $\vv_k \in \sV$, $\vv \in \R^{n_k}$
\item Existence of a terminal output variable: $\vv_\tout \in \sV$ such that $\vv_\tout \in \R^d$, and there does not exist $\vv' \in \sV$ where $\vv_\tout \succ \vv'$
\item Sufficient description of $h$ on $S$: for each $s \in S$, $\vv_\tout^\sF(s) = h(s)$
\end{enumerate}
\end{defn}

\begin{rmk*}For any $h$ and $m \geq 1$, an $m$-circuit of $h$ must exist by the trivial causal model. To see this, let $U \in S$ and $\vv_\tout \triangleq h(U)$. Then, for any $1 \leq i \leq m - 1$, define $\vv_i \triangleq U \mapsto 1$. In other words, we construct a causal graph with effectively one variable, letting the other hidden variables be constant functions. 
\end{rmk*}

\begin{rmk*}
$m$-circuits are also not unique. For instance, a joint change of bases to any $\vv_k \in \sV \setminus \vv_\tout$ and $f_k$ results in a new causal model that preserves functional faithfulness to $h$~\citep{park_linear_2024,geiger24alignments}. 
\end{rmk*}

\begin{defn}[Intervention]\label{def:intervention}
Let $(\sV, U, \sF, \succ)$ be a causal model. An \newterm{intervention} of $\vv_k \in \sV$, $\doc( \vv_k \gets \tilde f )$, redefines $\vv_k \triangleq \tilde f ( \tilde\parents(\vv_k) , U )$, for $\tilde \parents(\vv_k) \subset \{\vv \in \sV : \vv \succ \vv_k\}$. We denote the set of interventions on $\vv_k$ as $\gI(\vv_k)$.
\end{defn}

In other words, we replace the process used to derive $\vv_k$'s value, $f$, with a new function $\tilde f$. 
Given any circuit, one type of concrete intervention is \newterm{activation patching}. For a set of hidden variables, we patch its solutions using values derived from some other input~\citep[\textit{inter alia}]{vig_investigating_2020,wang_interpretability_2022,heimersheim_how_2024}. Interventions often serve as a check of consistency between circuits and their interpretations~\citep{beckers_abstracting_2019,geiger2021causal}. We define this notion next.

\begin{defn}[Abstraction]\label{def:abstraction}
Let $K_1 \triangleq (\sV, U, \sF, \succ_1), K_2 \triangleq (\tilde \sV, \tilde U, \tilde \sF, \succ_2)$ be causal models. $K_2$ \newterm{abstracts} $K_1$ if there exists surjective mappings $\pi: \powerset(\sV) \to \tilde \sV, \omega: \gI(\sV) \to \gI(\tilde \sV)$ that satisfies for all $\vv_k \in \sV, \tilde\vv_k \in \tilde\sV, \doc(\vv_k \gets f) \in \gI(\vv_k)$:
\begin{gather}
\tilde \vv_k = \pi\left( \bigcup_{\vv_k \in \pi^{-1}(\tilde \vv_k)} f_k(\parents(\vv_k), U) \right),\label{eq:obs-consistency} \\
\pi(\doc(\vv_k \gets f)) = \doc(\pi(\vv_k) \gets \omega(f)).\label{eq:inter-consistency}
\end{gather}
We call $\pi$ an \newterm{alignment}.
\end{defn}

\oneeqref{eq:obs-consistency} describes an observational consistency constraint: if $K_2$ abstracts $K_1$, then by only observing the hidden variables of $K_1$ we can infer all hidden values of $K_2$. Viewed under this lens, the alignment $\pi$ projects the fine-grained variables $\sV$ to the coarse-grained ones $\tilde \sV$. On the other hand, \oneeqref{eq:inter-consistency} is an intervention consistency constraint---stating that $\pi$ and $\omega$ must commute. In words, this means that if we apply the intervention $\doc(\vv_k\gets f)$ to yield a new causal model and then abstract that causal model through alignment $\pi$, this is equivalent to abstracting the causal model and then applying the transformed intervention on the high-level model. This implies that all of the causal relationships between variables in $K_2$ can be constructed by relationships between variables in $K_1$. Under this light, an interpretation of a circuit is simply an abstraction.

\begin{defn}[Interpretation]\label{def:interpretation}
A causal model $A \triangleq (\sV^\star, U, \sF^\star, \succ)$ that abstracts a circuit $K\triangleq (\sV, U, \sF, \succ)$ is an $\eta$-faithful interpretation of $K$ if 
there exists an output variable $\vv^\star_\tout \in \sV^\star$ such that $\lVert{\vv^{\star,\sF}_\tout - \vv^\sF_\tout}\rVert < \eta$ for the output variable $\vv_\tout \in \sV$. 
\end{defn}

\begin{rmk*}
Interpretations are abstractions of \textit{circuits}. However, \defref{def:interpretation} does not rely on the fact that circuits have real-valued hidden variables. Since interpretations also have an output variable, one could further construct an interpretation of an interpretation (\propref{prop:interp-compose}).
\end{rmk*}

\begin{rmk*}
For any circuit, $K$ is an interpretation of itself (under the identity abstraction), albeit this is not very useful. 
\end{rmk*}

\begin{rmk*}
We \textit{do not} require that $A$'s variables be real-valued, nor do we require that there exists a metric over the solutions of $A$ (in contrast to $K$). This is compatible with the conventions in mechanistic interpretability where often an interpretation is a symbolic description of $K$ that may not necessarily be performing numeric computation.   
\end{rmk*}


\section{Interpretations Must Be Abstraction Dependent}\label{app:abs-dependency}
Our notion of an interpretation is dependent on a fixed level of abstraction (the variable set $\sV$). This requirement may seem stringent, especially since we colloquially think of interpretations (or algorithms) as abstraction-independent entities. Further, abstraction-\textit{dependent} interpretations immediately give rise to \textit{non-identifiability}, where a single interpretation may correspond many implementations (at different levels of abstraction) \textit{and} a single circuit may also correspond many interpretations, each at a different level of abstraction. 

In this section, we argue that non-identifiability is a necessary price we must pay for a meaningful comparison between interpretations. Our main result here is that if defined over an existence statement of $\sV$ interpretive equivalence must collapse to functional equivalence.

\begin{thm}\label{thm:abstraction}
For $i\in\{1,2\}$, let $S \subset \Sigma^\star$ and $h_i$ be a language model such that $h_1(S) = h_2(S)$. Suppose that $h_i$ admit a circuit $K_i \triangleq (\sV_i, U, \sF_i, \succ)$. Then, 
\begin{enumerate}[nosep]
\item For $i \in \{1,2\}$, there exists an abstraction of $K_i$, $A_i \triangleq (\sV_A, U, \sF_i^A, \succ)$ such that there is a \newterm{bijective translation}\footnote{In general, a bijective translation is neither stronger nor weaker than an abstraction \citep{geiger2023causal}.} (\defref{def:bijective-translation}) between $A_1,A_2$. 
\item There is a language model $h^\star$ with circuit $K^\star$ such that $h^\star(S) = h_1(S) = h_2(S)$ and $K_1,K_2$ are abstractions of $K^\star$.  
\item If both $K_1,K_2$ are not minimal (i.e. there exists at least one causal variable such that any intervention does not affect its output), then there exists a bijective translation between $K_1, K_2$.
\end{enumerate}
\end{thm}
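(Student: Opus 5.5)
We prove the three items separately. Each rests on a single construction: collapsing a circuit onto its input--output behaviour, in the spirit of the trivial causal model of \defref{def:circuit} and its accompanying remark. Throughout we use the hypothesis $h_1(S)=h_2(S)$. We also use the fact that an alignment in \defref{def:abstraction} may send arbitrary subsets of low-level variables to a single high-level variable.

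\textbf{Item 1.} We take $\sV_A$ to be a two-variable set $\{\vv_U^\star,\vv^\star_\tout\}$, where $\vv^\star_U\triangleq U$ is an identity copy of the input and $\vv^\star_\tout\triangleq h_i(\vv^\star_U)$.
\begin{itemize}[nosep]
\item The alignment $\pi_i$ sends $\sV_i\setminus\{\vv_\tout\}$ to $\vv^\star_U$ and $\{\vv_\tout\}$ to $\vv^\star_\tout$.
\item Observational consistency \eqref{eq:obs-consistency} holds because the value of $\vv^\star_\tout$ is the solution $\vv_\tout^{\sF_i}=h_i$.
\item For intervention consistency \eqref{eq:inter-consistency}, we define $\omega_i$ by pushing a low-level intervention forward to its induced effect on the output. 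An intervention on $\vv_\tout$ becomes an intervention on $\vv^\star_\tout$. An intervention on an internal variable becomes the intervention on $\vv^\star_\tout$ that replaces $h_i$ by the intervened circuit's input--output map.
\end{itemize}
Since $\sF^A_1=\sF^A_2$ on $S$ (both equal $h_1=h_2$), the identity map on $\sV_A$ is a bijective translation (\defref{def:bijective-translation}) between $A_1$ and $A_2$. It commutes with interventions trivially, because both models share the same variable set and structural equations.

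\textbf{Item 2.} We build $K^\star$ as a disjoint union, in the spirit of a product. Its variables are $\sV_1\sqcup\sV_2\sqcup\{\vv^\star_\tout\}$. The copies of $\sF_1$ and $\sF_2$ run in parallel on the shared input $U$, and the output is $\vv^\star_\tout\triangleq\tfrac12(\vv_{1,\tout}+\vv_{2,\tout})$. On $S$ this equals $h_1=h_2$, so we may set $h^\star$ to be this averaged model. Then $K^\star$ is a circuit of $h^\star$ on $S$ by \defref{def:circuit}. The alignment onto $K_1$ is the identity on the $\sV_1$ block. It absorbs all of $\sV_2$ together with $\vv^\star_\tout$ into $\vv_{1,\tout}$, which is legitimate because $\pi$ acts on subsets. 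Interventions on $\sV_2$ are mapped by $\omega$ to their induced change on the output, exactly as in Item 1. The symmetric alignment gives $K_2$.

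\textbf{Item 3.} This is the main obstacle, because a bijective translation needs $|\sV_1|=|\sV_2|$ together with intervention-commuting maps. The plan is to use non-minimality to pad both circuits to a common shape.
\begin{itemize}[nosep]
\item A variable whose interventions never affect the output is causally inert. We may redefine it freely, as a constant or as a copy of any quantity, without changing the output. This is the same freedom used in the remark that $m$-circuits are not unique.
\item Using these inert slots (and adding further inert variables if needed), we rewrite each $K_i$ as an equivalent model with matched variable counts. In this rewrite one slot carries the input and one carries $h_i$.
\item The translation $\tau$ then maps output to output, the input copy to the input copy, and the remaining inert variables to each other bijectively. Intervention consistency holds on inert variables because they have no downstream output effect, and on the output because $h_1=h_2$ on $S$.
\end{itemize}
The delicate point is whether interventions on the \emph{non-inert} internal variables of $K_1$ can be transported to $K_2$. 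We will try to handle this by composing $\tau$ with the Item 1 collapse, so that every internal intervention is represented through its induced output map. If that composition does not go through, we will weaken the conclusion to a translation relative to the output-level interventions, noting this explicitly.
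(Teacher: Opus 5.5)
\textbf{Item 3 has a genuine gap.} Items~1 and~2 follow the paper's constructions in outline: collapse to the trivial causal model, and run $K_1,K_2$ in parallel and average. Item~3 does not. Your $\tau$ is specified only on the output, the input copy and the inert variables. It never says where the values of the \emph{non-inert} internal variables of $K_1$ go, so it is not yet a map on $\sV_1^{\sF_1}$, let alone a bijection. Neither of your remedies closes this.
\begin{itemize}
\item Rewriting each $K_i$ so that only the input and $h_i$ occupy live slots replaces the circuits, but the theorem asks for a translation between $K_1$ and $K_2$ themselves. To pull a translation between the rewrites back, you would need a bijective translation from $K_i$ to its rewrite. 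That passage, like composing with the Item~1 collapse, forgets the coordinates of the live internal variables, so it is not injective on $\sV_i^{\sF_i}$.
\item Weakening to output-level interventions proves a different, weaker statement.
\end{itemize}
The premise behind your padding is also off. In \defref{def:bijective-translation}, $\tau$ is a bijection between solution spaces $\sV_1^{\sF_1}\to\sV_2^{\sF_2}$, i.e.\ finite products of Euclidean spaces, not between variable sets. So $|\sV_1|=|\sV_2|$ is not needed.

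The missing idea is the paper's cardinality argument. By \propref{prop:analysis}, all such spaces are in set-theoretic bijection. So one fixes the output ($\vv^1_\tout\mapsto\vv^2_\tout$, legitimate since $h_1=h_2$ on $S$) and encodes all remaining variables of $K_1$ into the single unused variable of $K_2$, and symmetrically. Both circuits are left untouched. Non-minimality is used only to supply a receptacle whose contents never reach the output. The paper then simply asserts that \eqref{eq:obs-consistency} and \eqref{eq:inter-consistency} are easy to check for this map. So your worry about transporting interventions on live internal variables is not worked out there either, but your proposal does not yet reach even a candidate bijection.

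\textbf{Item 1 has an internal inconsistency.} Your $\pi_i$ sends $\sV_i\setminus\{\vv_\tout\}$ to the input copy $\vv^\star_U$, yet your $\omega_i$ sends interventions on internal variables to interventions on $\vv^\star_\tout$. \eqref{eq:inter-consistency} requires the image of $\doc(\vv_k\gets f)$ to be an intervention on $\pi_i(\vv_k)=\vv^\star_U$. Intervening on $\vv^\star_U$ only produces output maps of the form $h_i\circ\phi$, which need not equal the intervened circuit's input--output map. Observational consistency for $\vv^\star_U$ can also fail, because the internal variables need not determine $U$; the constant padding variables in the remark after \defref{def:circuit} are an example. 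There are two ways out:
\begin{itemize}
\item Do as the paper does: align $U\mapsto U$ and $\vv_\tout\mapsto\vv_\tout$, leave the other variables unmapped, and let $\omega$ act only on interventions on the input and output.
\item Drop $\vv^\star_U$ and send all of $\sV_i$ to $\vv^\star_\tout$. This makes your output-pushforward $\omega$ consistent with $\pi_i$.
\end{itemize}
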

Since we do not require circuits to be minimal (see \secref{sec:causal}), statement \newterm{(3)} implies that we must restrict the set of permissible alignments. Otherwise, any non-minimal circuit is 0-interpretively equivalent to any other non-minimal circuit (see \propref{prop:interp-abstract-suff}). 

Recall that for any causal model $K \triangleq (\sV, U, \sF, \succ)$, given an input $U \gets \vu$, there exists a solution to each of its variables which we denote as $\sV^\sF(\vu)$. If $K$ is an $m$-circuit, then 
\begin{equation}
\sV^{\sF}(\vu) \in \Sigma^\star \times \R^{n_1} \times \ldots \times \R^{n_m} \times \R^{|\Sigma|}.
\end{equation}
For brevity, we slightly abuse the notation and denote $\sV^{\sF} \triangleq \R^{n_1} \times \ldots \times \R^{n_m} \times \R^{|\Sigma|}$. In words, a \newterm{bijective translation} is a causally-consistent, bijective mapping between the solutions of two causal models. Essentially, if one model is a bijective translation of another, we can think of it simply as a relabeling of the other one. 
\begin{defn}\label{def:bijective-translation}
For $i \in\{1,2\}$, let $K_i \triangleq (\sV_i, U, \sF_i, \succ)$ be a causal model. For any $\vu \in S$, $\tau :\sV_1^{\sF_1} \to \sV_2^{\sF_2}$ is a bijective translation if $\tau$ satisfies \oneeqref{eq:obs-consistency} and there exists $\omega: \gI(\sV_1) \to \gI(\sV_2)$ that satisfies \oneeqref{eq:inter-consistency}.
\end{defn}
\begin{prop}\label{prop:analysis}
For any $n,m\in\Z^+$, there exists a bijection between $\R^n$ and $\R^m$.
\end{prop}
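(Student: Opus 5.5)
The plan is to show that both $\R^n$ and $\R^m$ are in bijection with $\R$, and then compose one bijection with the inverse of the other. This reduces \propref{prop:analysis} to the single claim that $\R^k$ and $\R$ have the same cardinality for every $k \in \Z^+$. I will prove that claim by induction on $k$, using the case $k=2$ as the core step: if $\R^{k-1} \cong \R$, then $\R^k = \R^{k-1}\times\R \cong \R\times\R \cong \R$.

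For the base step $\R^2 \cong \R$, I will avoid explicit digit interleaving on $\R$ itself, because decimal expansions are non-unique and that causes trouble. The cleaner route goes through Cantor--Schröder--Bernstein.
\begin{itemize}[nosep]
\item The injection $\R \to \R^2$, $x \mapsto (x,0)$, is immediate.
\item For an injection $\R^2 \to \R$, first use $x \mapsto \tfrac{1}{2} + \tfrac{1}{\pi}\arctan x$, which is a bijection $\R \to (0,1)$. This reduces the problem to injecting $(0,1)^2$ into $(0,1)$.
\item Write each $x \in (0,1)$ by its unique decimal expansion that does not end in an infinite tail of $9$s.
\item Send $(x,y)$ with digits $0.x_1x_2\ldots$ and $0.y_1y_2\ldots$ to the interleaved number $0.x_1y_1x_2y_2\ldots$.
\end{itemize}
This map is injective because the interleaved expansion also has no terminal tail of $9$s. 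It therefore remains the canonical expansion of its value, so distinct pairs give distinct outputs. It need not be surjective, and that is why Schröder--Bernstein is needed to upgrade the pair of injections to a bijection.

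The main obstacle is precisely this non-uniqueness of decimal representations. Handling it through canonical expansions together with Schröder--Bernstein sidesteps the need for an explicit surjective map. As an alternative, I could note that $|\R| = |2^{\mathbb N}|$ and use $|2^{\mathbb N}\times 2^{\mathbb N}| = |2^{\mathbb N \sqcup \mathbb N}| = |2^{\mathbb N}|$, since $\mathbb N \sqcup \mathbb N \cong \mathbb N$. That argument gives $\R^k \cong \R$ in one stroke, though it takes $|\R| = 2^{\aleph_0}$ as a known fact.

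Finally, compose the bijections $\R^n \to \R \to \R^m$. The result is a set-theoretic bijection only; it is not continuous, since $\R^n$ and $\R^m$ are not homeomorphic when $n \ne m$. That is all \propref{prop:analysis} asserts, and all that is needed later to build bijective translations between solution spaces of arbitrary dimension.
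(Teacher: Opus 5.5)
\textbf{Verdict: correct.} The paper gives no proof of \propref{prop:analysis}. It states the result as a standard cardinality fact and uses it immediately in part (3) of the proof of \thmref{thm:abstraction}, so your argument fills a gap rather than following or departing from a given route.

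Your steps all check out:
\begin{itemize}[nosep]
\item The induction $\R^k=\R^{k-1}\times\R$ correctly reduces everything to the case $\R^2$ versus $\R$.
\item The map $x\mapsto\tfrac12+\tfrac1\pi\arctan x$ is a bijection onto $(0,1)$.
\item Your injectivity argument for interleaving is the right one. The canonical expansion of $x$ has infinitely many digits other than $9$, so the interleaved sequence has infinitely many non-$9$ digits in its odd positions. It is therefore itself the canonical expansion of its value, and distinct pairs go to distinct reals.
\item Schröder--Bernstein then turns the two injections into a bijection, with no explicit surjection needed.
\end{itemize}

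Compared with just citing $|\R^n|=|\R|=2^{\aleph_0}$, as the paper implicitly does, your explicit route is self-contained and elementary. Your cardinal-arithmetic alternative is shorter, handling all $k$ at once via $\mathbb N\times\{1,\dots,k\}\cong\mathbb N$, but it assumes $|\R|=2^{\aleph_0}$.

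One caveat on your closing aside, which the proposition does not need. For $n\neq m$ it is true that no bijection $\R^n\to\R^m$ is continuous. However, ``not homeomorphic'' alone does not establish this, because a continuous bijection need not have a continuous inverse. Ruling it out uses invariance of domain, plus a Baire or $\sigma$-compactness argument when $n<m$.

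Your point that the bijection is purely set-theoretic is still apt. This structureless bijection is exactly what the paper uses to build the ``bijective translation'' in \thmref{thm:abstraction}(3).
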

We are now ready for the proof of \thmref{thm:abstraction}. 
\begin{proof} We prove all claims constructively. 
\begin{enumerate}[nosep,leftmargin=.3in]
\item For $i\in\{1,2\}$, let $A_i$ be the trivial causal model associated with $h_i$: it has two causal variables $U, \vv_\tout$ such that $\vv_\tout \triangleq h_i(U)$. Consider an an alignment map $\pi_i$ such that $\pi_i(\vv_\tout) = \vv_\tout$, $\pi_i(U) = U$, and for all other $\vv \in \sV_i, \pi_i(\vv) = \perp$ (i.e. $\pi$ is undefined on other variables in $\sV_i$). Define the intervention map $\omega$ to be the identity if and only if $\doc(\vv_k \gets f) \in \gI(\sV_i)$ is an intervention on $U$ or $\vv_\tout$. Then, it follows that the only valid interventions defined by $\omega$ are those directly on the input and output variables. It is easy to see that both \twoeqrefs{eq:obs-consistency}{eq:inter-consistency} hold. Since $h_1 = h_2$, $A_1, A_2$ constructed in this way are the same causal model. Thus, the identity map forms a trivial bijective translation between them.  
\item Let $K^\star \triangleq (\sV, U, \sF, \succ)$ such that $\sV = \sV_1\cup\sV_2 \cup \{U, \vv_\tout\}$. Construct $\sF$ such that it runs $K_1,K_2$ in parallel and then averages their output. Concretely, let $\vv^1_\tout,\vv^2_\tout \in \sV$ be the output variables of $\sV_1,\sV_2$, respectively. Also let $U^1,U^2 \in \sV$ be their input variables. Define 
\[
    \parents(U_1) = \parents(U_2) = \{U\} \qquad \parents(\vv_\tout) = \{\vv^1_\tout, \vv^2_\tout\},
\]
and their transition functions
\[
    f_{U_1}(U) = f_{U_2}(U) = U \qquad f_{\vv_\tout}(\{\vv^1_\tout, \vv^2_\tout\}, U) = \frac{1}{2} \left(f_{\vv^1_\tout}(\parents(\vv^1_\tout),U) + f_{\vv^2_\tout}(\parents(\vv_\tout^2), U)\right).
\]
Clearly, $K^\star$ is a valid circuit for some language model $h = h_1 = h_2$. Let $\pi_1,\pi_2$ be the identity alignments from $K^\star \to K_i$. By the same constructive above for $\omega$, $K_1,K_2$ must be both be an abstraction for $K^\star$. 
\item Let $\tau$ be the identity function that maps $\vv_\tout^1 \in \sV_1 \to \vv_\tout^2 \in \sV_2$. By \defref{def:circuit-informal}, for $i \in \{1,2\}$, any variable $\vv_k \in \sV_i \setminus \{U, \vv_\tout^i\}$ takes on values in $\R^{m_k}$ for $m_k \in \Z^+$. By assumption, for both models $K_1,K_2$, there exists at least one causal variable that is ``unused.`` Denote this variable as $\vv_{\mathrm{unused}}^i$. By \propref{prop:analysis}, there exists a bijection that maps $\sigma : \sV_1 \setminus \{U, \vv_\tout, \vv_{\mathrm{unused}}^1\} \to \vv_{\mathrm{unused}}^2$. Composing $\sigma \circ \tau$ yields a bijection from $\sV_1^{\sF_1} \to \sV_2^{\sF_2}$. It is easy to see that this is a bijective translation which satisfies \twoeqrefs{eq:obs-consistency}{eq:inter-consistency}. By symmetry, the other direction also holds.
\end{enumerate}
\end{proof}

\section{Properties of Interpretive Equivalence and Compression}\label{app:properties}

As discussed in Section~\ref{sec:ie}, an interpretation of a model need not be a lossless description of that model. For many applications an interpretable explanation is necessarily lossy with respect to the learned model. In this sense, an abstraction alone is too restrictive as it requires any interpretation to preserve the exact functional behavior of the model. Herein, we explore some properties of interpretive equivalence, compression and illustrate their similarities and differences with causal abstraction.

We first show that both implementations and interpretations compose. That is, \textit{an implementation of an implementation is also an implementation} and \textit{an interpretation of an interpretation is also an interpretation}. 

\begin{lem}\label{lem:composition-alignments}
For $i \in \{1,2,3\}$, let $K_i \triangleq (\sV_i, U, \sF_i, \succ_i)$ be causal models. Let $\pi_{1,2}$ be an alignment from $K_1 \to K_2$ and $\pi_{2,3}$ an alignment from $K_2 \to K_3$. Then, $K_3$ is an abstraction of $K_1$ with an alignment map $\pi_{2,3}\circ\pi_{1,2}$. 
\end{lem}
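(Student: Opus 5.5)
We need to show that $\pi_{2,3}\circ\pi_{1,2}$, paired with $\omega_{2,3}\circ\omega_{1,2}$, satisfies the conditions of \defref{def:abstraction}: surjectivity, observational consistency \oneeqref{eq:obs-consistency}, and interventional consistency \oneeqref{eq:inter-consistency}. The plan is to check each condition for the composite by chaining the corresponding condition for the two given alignments.

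One typing issue comes first. In \defref{def:abstraction}, $\pi_{1,2}$ maps $\powerset(\sV_1)\to\sV_2$, while $\pi_{2,3}$ takes subsets of $\sV_2$. So the composite must be read with $\pi_{2,3}$ applied to the image-set. We define
\[
\pi_{1,3}(W) \triangleq \pi_{2,3}\bigl(\{\pi_{1,2}(W)\}\bigr),
\]
or, more usefully, define it through preimages:
\[
\pi_{1,3}^{-1}(\vv^{(3)}) = \bigcup_{\vv^{(2)}\in\pi_{2,3}^{-1}(\vv^{(3)})}\pi_{1,2}^{-1}(\vv^{(2)}).
\]
We fix this convention at the outset.

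\textbf{Surjectivity.} A composition of surjections is surjective, applied both to $\pi$ and to $\omega$.

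\textbf{Observational consistency.} Fix $\vv^{(3)}\in\sV_3$.
\begin{enumerate}[nosep]
\item By \oneeqref{eq:obs-consistency} for $\pi_{2,3}$, the value of $\vv^{(3)}$ is $\pi_{2,3}$ applied to the union of the values of $\vv^{(2)}\in\pi_{2,3}^{-1}(\vv^{(3)})$.
\item By \oneeqref{eq:obs-consistency} for $\pi_{1,2}$, each such value equals $\pi_{1,2}$ applied to the union of the values $f_k(\parents(\vv_k),U)$ over $\vv_k\in\pi_{1,2}^{-1}(\vv^{(2)})$.
\item Substituting the second step into the first and regrouping the nested union by the preimage identity above gives \oneeqref{eq:obs-consistency} for $\pi_{1,3}$.
\end{enumerate}
Solutions are unique for deterministic acyclic models, so the substitution is legitimate. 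If needed, we argue by induction along the partial order $\succ_3$, so that parents are already consistent.

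\textbf{Interventional consistency.} For $\doc(\vv_k\gets f)\in\gI(\vv_k)$, we compute
\[
\pi_{1,3}(\doc(\vv_k\gets f)) = \pi_{2,3}\bigl(\doc(\pi_{1,2}(\vv_k)\gets\omega_{1,2}(f))\bigr) = \doc\bigl(\pi_{2,3}(\pi_{1,2}(\vv_k))\gets\omega_{2,3}(\omega_{1,2}(f))\bigr).
\]
The first equality uses \oneeqref{eq:inter-consistency} for $\pi_{1,2}$. The second uses it for $\pi_{2,3}$, which requires that $\omega_{1,2}(f)$ lies in $\gI(\sV_2)$; this holds by the codomain of $\omega_{1,2}$.

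The main obstacle is the mismatch between $\powerset$-domains and single-variable images, that is, making the regrouping of unions rigorous. I would handle it by working throughout with preimage-fibers as above, so that every step is an identity between set-indexed unions.
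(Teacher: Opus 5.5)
Your proposal is correct and follows the paper's proof essentially step for step. Observational consistency comes from substituting $\pi_{1,2}$'s consistency condition into $\pi_{2,3}$'s and regrouping the nested unions over fibers. Interventional consistency comes from chaining the two commuting relations with $\omega_{2,3}\circ\omega_{1,2}$, and your fiber convention plays the role of the paper's remark on how to read the type-mismatched composite.

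One small caution: commit to the fiber definition alone. The singleton formula $\pi_{2,3}(\{\pi_{1,2}(W)\})$ disagrees with it whenever $\pi_{2,3}$ sends a multi-variable subset of $\sV_2$ to a single variable of $\sV_3$.
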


\begin{rmk}[On Composing Alignments]
We are liberally abusing the notation here. Technically, the composition $\pi_{2,3}\circ\pi_{1,2}$ is not well defined since the alignments have incompatible types:
$\pi_{1,2} : \powerset(\sV_1) \to \sV_2$ and $\pi_{2,3} : \powerset(\sV_2)\to\sV_3$. We write $\pi_{2,3}\circ\pi_{1,2}$ to mean the process of starting from the causal graph $K_1$ applying the alignment $\pi_{1,2}$ to yield a new causal graph $K_2$ and then applying the alignment $\pi_{2,3}$ to yield the causal graph $K_3$. This is possible because an alignment alone is sufficient to induce an abstraction (see \citeclean{geiger2023causal}, Remark 32 for a construction). 
\end{rmk}

\begin{proof}
We first check \oneeqref{eq:obs-consistency}. Since $\pi_{2,3}$ is an alignment that enables $K_2$ to abstract $K_1$, for any $\vv_k^3 \in \sV_3$, 
\begin{align*}
\vv_k^3 &= \pi_{2,3}\left(\bigcup_{\vv_k^2 \in \pi_{2,3}^{-1}(\vv_k^3)} f_k^2 (\parents(\vv_k^2), U)\right). \\
\intertext{By \defref{def:causal-model}, $\vv_k^2 = f_k^2(\parents(\vv_k^2), U)$. Then, because $K_2$ is an abstraction of $K_1$ through the alignment map $\pi_{1,2}$, it follows that}
\vv_k^3 &= \pi_{2,3}\left(\bigcup_{\vv_k^2 \in \pi_{2,3}^{-1}(\vv_k^3)} \pi_{1,2}\left(\bigcup_{\vv_k^1 \in \pi_{1,2}^{-1}(\vv_k^2)}f_k^1 (\parents(\vv_k^1), U)\right)\right), \\
 &= (\pi_{2,3}\circ \pi_{1,2})\left(\bigcup_{\vv_k^2 \in \pi_{2,3}^{-1}(\vv_k^3)} \left(\bigcup_{\vv_k^1 \in \pi_{1,2}^{-1}(\vv_k^2)}f_k^1 (\parents(\vv_k^1), U)\right)\right), \\
 &= (\pi_{2,3}\circ \pi_{1,2})\left(\bigcup_{\vv_k^1 \in (\pi_{2,3}\circ \pi_{1,2})^{-1}(\vv_k^3)} f_k^1 (\parents(\vv_k^1), U)\right).
\end{align*}
It remains to show that the mappings between interventions holds. Denote $\omega_{1,2} : \gI(\sV_1) \to \gI(\sV_2)$ and $\omega_{2,3} : \gI(\sV_2) \to \gI(\sV_3)$. We claim that \oneeqref{eq:inter-consistency} is satisfied with $\omega_{2,3} \circ\omega_{1,2}$. Since $\omega_{1,2}$ and $\omega_{2,3}$ satisfy \oneeqref{eq:inter-consistency},
\begin{align*}
\pi_{2,3}(\doc(\vv_k^2 \gets f)) &= \doc(\pi_{2,3}(\vv_k^2) \gets \omega_{2,3}(f)), \\
\pi_{2,3}(\pi_{1,2}(\doc(\vv_k^1 \gets f))) &= \doc(\pi_{2,3}(\pi_{1,2}(\vv_k^1))\gets \omega_{2,3}(\omega_{1,2}(f))), \\
(\pi_{2,3} \circ \pi_{1,2})(\doc(\vv_k^1 \gets f)) &= \doc((\pi_{2,3} \circ \pi_{1,2})(\vv_k^1) \gets (\omega_{2,3}\circ\omega_{1,2})(f)),
\end{align*}
where $\vv_k^2 \in \sV_2$, $\vv_k^1 \in \pi_{1,2}^{-1}(\vv_k^2)$, and $f$ are chosen arbitrarily. 
\end{proof}

\begin{prop}\label{prop:interp-compose}
Let $K \triangleq (\sV, U, \sF, \succ)$ be a circuit. Let $A \triangleq (\sV_A,U,\sF_A,\succ)$ be an $\eta$-faithful interpretation of $K$ through an alignment $\pi$. Let $A^\star \triangleq (\sV_A^\star,U,\sF_A^\star,\succ)$ be an $\eta$-faithful interpretation of $A$ through an alignment $\pi^\star$. Then, $A^\star$ is an $2\eta$-faithful interpretation of $K$ through an alignment $\pi^\star \circ \pi$.  
\end{prop}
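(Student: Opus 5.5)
The proof splits into the two defining conditions of an $\eta$-faithful interpretation (\defref{def:interpretation}): $A^\star$ must abstract $K$, and its output must be close to $K$'s output. These are handled separately. The abstraction part is delegated entirely to \lemref{lem:composition-alignments}, and the faithfulness part follows from the triangle inequality.

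\textbf{Abstraction.} By hypothesis, $\pi$ is an alignment from $K$ to $A$, with some intervention map $\omega$ satisfying \twoeqrefs{eq:obs-consistency}{eq:inter-consistency}. Likewise, $\pi^\star$ is an alignment from $A$ to $A^\star$ with an intervention map $\omega^\star$. Applying \lemref{lem:composition-alignments} with $K_1 = K$, $K_2 = A$, $K_3 = A^\star$ shows that $A^\star$ abstracts $K$ through $\pi^\star\circ\pi$, with intervention map $\omega^\star\circ\omega$; here composition is read in the sense of the accompanying remark.

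One technicality must be checked here. The lemma is stated for causal models in general. $A$ need not be a circuit, since its variables need not be real-valued, but \defref{def:abstraction} never uses real-valuedness, so the lemma applies verbatim. Surjectivity of the composite alignment is also needed, and it follows because a composition of surjections is surjective.

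\textbf{Faithfulness.} Let $\vv_\tout$ be $K$'s output variable, let $\vv^A_\tout$ be the output variable of $A$ witnessing $\eta$-faithfulness to $K$, and let $\vv^{\star}_\tout$ be the output variable of $A^\star$ witnessing $\eta$-faithfulness to $A$. Both faithfulness conditions use the same norm over $S$, so the triangle inequality gives
\[
\bigl\lVert \vv^{\star}_\tout - \vv_\tout \bigr\rVert \le \bigl\lVert \vv^{\star}_\tout - \vv^A_\tout\bigr\rVert + \bigl\lVert \vv^A_\tout - \vv_\tout\bigr\rVert < \eta + \eta = 2\eta .
\]
This exhibits $\vv^\star_\tout$ as the required output variable of $A^\star$, completing the proof.

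\textbf{Main obstacle.} The difficulty is not computational. It is making sense of $\lVert \vv^\star_\tout - \vv^A_\tout\rVert$ when $A$'s variables are symbolic. I would handle this as the paper implicitly does: every output variable compared under faithfulness takes values in the common output space (that of $h$, i.e.\ $\R^{|\Sigma|}$), so the norm is the one already used in \defref{def:interpretation}. I would state this as an explicit convention at the start of the proof.
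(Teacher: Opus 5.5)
Your proposal is correct and follows the paper's own proof: the abstraction part comes from \lemref{lem:composition-alignments} applied to the chain $K \to A \to A^\star$, and the $2\eta$ bound comes from the triangle inequality on the output variables. Your remarks on the surjectivity of the composite alignment and on the common output space for the norm are reasonable clarifications that the paper leaves implicit.
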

\begin{proof}
There exists an alignment $\pi$ from $K \to A$ and an alignment $\pi^\star$ $A \to A^\star$. By \lemref{lem:composition-alignments}, $A^\star$ is an abstraction of $K$ with the alignment map $\pi^\star\circ\pi$. There exists a variable $\vv_\tout^{A^\star} \in \sV_A^\star, \vv_\tout^A \in \sV_A$ such that $\norm{\vv_\tout^{A^\star} - \vv_\tout^A} < \eta$ and a variable $\vv_\tout \in \sV$ such that $\norm{\vv_\tout^{A} - \vv_\tout} < \eta$. By triangle inequality, $\norm{\vv_\tout^{A^\star} - \vv_\tout} < 2\eta$.
\end{proof}

\begin{prop}
Let $K\triangleq (\sV,U,\sF, \succ)$ be a circuit. Let $A$ be an $\eta$-interpretation of $K$ and $\Pi$ be an alignment class that maps $K$'s variables onto $A$'s. Let $A^\star$ be an $\eta$-interpretation of $A$ and $\Pi_\star$ be an alignment class that maps $A$'s variables onto $A^\star$'s. Then, $\Pi^{-1}(\Pi_\star^{-1}(A_\star))$ are implementations of $A^\star$. 
\end{prop}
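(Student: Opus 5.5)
The plan is to unwind the definitions and reduce the claim to \propref{prop:interp-compose} together with \lemref{lem:composition-alignments}. We first fix the meaning of the composite preimage. By \defref{def:implementation}, $\Pi_\star^{-1}(A^\star)$ is the set of all admissible function families $F_A$ over $(\sV_A, U)$ such that $A^\star$ is an $\eta$-faithful interpretation of $(\sV_A, U, F_A, \succ_{F_A})$ under some $\pi^\star \in \Pi_\star$. We then read $\Pi^{-1}(\Pi_\star^{-1}(A^\star))$ as
\[
\bigcup_{F_A \in \Pi_\star^{-1}(A^\star)} \Pi^{-1}\big((\sV_A, U, F_A, \succ_{F_A})\big).
\]
This is the set of $F$ over $(\sV, U)$ for which some intermediate model built on $A$'s variables is an $\eta$-faithful interpretation of $(\sV, U, F, \succ_F)$ via some $\pi \in \Pi$, and itself admits $A^\star$ as an $\eta$-faithful interpretation via some $\pi^\star \in \Pi_\star$.

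Next, we fix an arbitrary element $F$ of this set, together with witnesses $F_A$, $\pi \in \Pi$, and $\pi^\star \in \Pi_\star$. Write $K_F \triangleq (\sV, U, F, \succ_F)$ and $A_{F_A} \triangleq (\sV_A, U, F_A, \succ_{F_A})$. We have two facts. First, $A_{F_A}$ is an $\eta$-faithful interpretation of $K_F$ through $\pi$. Second, $A^\star$ is an $\eta$-faithful interpretation of $A_{F_A}$ through $\pi^\star$. Applying \propref{prop:interp-compose} to the triple $(K_F, A_{F_A}, A^\star)$ gives three conclusions:
\begin{itemize}[nosep, leftmargin=0.3in]
\item $A^\star$ abstracts $K_F$ via $\pi^\star \circ \pi$, using \lemref{lem:composition-alignments} for both the observational consistency \oneeqref{eq:obs-consistency} and the intervention consistency \oneeqref{eq:inter-consistency};
\item the two output errors chain by the triangle inequality;
\item hence $A^\star$ is a $2\eta$-faithful interpretation of $K_F$.
\end{itemize}
So $F$ is an implementation of $A^\star$ in the sense of \defref{def:implementation}, with alignment class $\Pi_\star \circ \Pi \triangleq \{\pi^\star \circ \pi : \pi \in \Pi,\ \pi^\star \in \Pi_\star\}$ and faithfulness tolerance $2\eta$.

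The main obstacle is that the statement does not specify the alignment class or the tolerance. Honestly, the conclusion only holds with the composed class $\Pi_\star\circ\Pi$ and the degraded tolerance $2\eta$, not $\eta$. I would state the proposition in that form, noting that ``implementations of $A^\star$'' means membership in $(\Pi_\star\circ\Pi)^{-1}(A^\star)$ at faithfulness $2\eta$.

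A secondary subtlety is that $\Pi$ was defined relative to the fixed interpretation $A$, whereas the preimage uses varying intermediate models $A_{F_A}$. I would handle this by assuming, as the paper does implicitly, that alignments depend only on variable sets. Then $\pi \in \Pi$ remains a legitimate map $\powerset(\sV) \to \sV_A$ for every $F_A$. The composition remark after \lemref{lem:composition-alignments} justifies treating $\pi^\star\circ\pi$ as a genuine alignment.
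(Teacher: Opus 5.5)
Your proposal is correct and follows the paper's route. The paper's proof is a one-line appeal to \propref{prop:interp-compose}, which is exactly what you do once you have unwound the composite preimage. Your caveat that membership holds for the composed class $\Pi_\star\circ\Pi$ at tolerance $2\eta$ (rather than $\eta$) is precisely what that proposition delivers, and the paper leaves it implicit.
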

\begin{proof}
  The proof is trivial and directly follows from \propref{prop:interp-compose}.
\end{proof}

We now relate interpretive equivalence to causal abstraction and show necessary and sufficient conditions for interpretive equivalence using abstractions. 

For $i\in\{1,2\}$, let $K \triangleq (\sV, U, \sF, \succ)$ be circuits with $\eta$-interpretations $A_i\triangleq (\sV_i^A, U, \sF_i^A, \succ)$. Let $\Pi_i$ be alignment classes that map $K_i$'s variables onto $A_i$'s variables. 

\begin{prop}\label{prop:interp-abstract-suff}
Suppose $A_1$ is an abstraction of $A_2$ and vice versa through alignment maps $\pi_{1,2}, \pi_{2,1}$. $A_1,A_2$ are 0-interpretively equivalent if $\pi_{2,1} \circ \Pi_1 \subset \Pi_2$ and $\pi_{1,2} \circ \Pi_2 \subset \Pi_1$. 
\end{prop}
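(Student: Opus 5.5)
The plan is to show the two implementation sets coincide, $\Pi_1^{-1}(A_1) = \Pi_2^{-1}(A_2)$. Once they are equal, every point of one set has distance $0$ to the other, so $\Haus\bigl(\Pi_1^{-1}(A_1), \Pi_2^{-1}(A_2)\bigr) = 0$. Hence $\interpd(A_1,A_2)=0$, which is 0-interpretive equivalence in the sense of \defref{def:interp-equivalence}. By the symmetry of the hypotheses it suffices to prove one inclusion, $\Pi_1^{-1}(A_1) \subset \Pi_2^{-1}(A_2)$. The reverse inclusion follows by swapping the roles of the indices, using $\pi_{1,2}$ and $\pi_{1,2}\circ\Pi_2\subset\Pi_1$.

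\textbf{Forward inclusion.} Take $F \in \Pi_1^{-1}(A_1)$. By \defref{def:implementation}, there is some $\pi \in \Pi_1$ such that $A_1$ is an $\eta$-faithful interpretation of $(\sV, U, F, \prec_F)$ under $\pi$. In particular, $A_1$ abstracts this circuit through $\pi$ (\defref{def:abstraction}). By hypothesis, $A_2$ abstracts $A_1$ through $\pi_{2,1}$ (reading the indexing so that $\pi_{2,1}$ maps $A_1$'s variables onto $A_2$'s, consistent with the condition $\pi_{2,1}\circ\Pi_1\subset\Pi_2$). \lemref{lem:composition-alignments} then gives that $A_2$ abstracts $(\sV,U,F,\prec_F)$ through $\pi_{2,1}\circ\pi$, and this composed alignment lies in $\Pi_2$ by assumption. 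Since the interpretation of $A_1$ under $\pi$ is faithful at level $\eta$, \propref{prop:interp-compose} yields faithfulness of $A_2$ at level at most $2\eta$, so $F \in \Pi_2^{-1}(A_2)$.

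\textbf{Main obstacle.} The faithfulness level is the delicate step. \propref{prop:interp-compose} only gives $2\eta$, whereas membership in $\Pi_2^{-1}(A_2)$ is defined at level $\eta_2$. I would resolve this by exploiting the fact that the abstractions go both ways. Because $A_1$ and $A_2$ mutually abstract each other, observational consistency (\oneeqref{eq:obs-consistency}) applied to the output variable forces $A_2$'s output solution to be determined by $A_1$'s, and vice versa. I would argue that this makes their output solutions coincide on $S$, so the composed interpretation loses no faithfulness and stays at level $\eta$. If that argument does not go through cleanly, the fallback is to read the implementation sets at a common tolerance, noting that the Appendix already treats $\eta$ as a shared parameter. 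Either way, the conclusion reduces to set equality and hence zero Hausdorff distance.
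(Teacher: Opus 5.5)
Your core argument is the paper's proof. You take the witness $\pi\in\Pi_1$ for $F\in\Pi_1^{-1}(A_1)$, compose it with $\pi_{2,1}$ via \lemref{lem:composition-alignments}, and use $\pi_{2,1}\circ\Pi_1\subset\Pi_2$ to land in $\Pi_2$. You then argue symmetrically and read off $\Haus = 0$ from $\Pi_1^{-1}(A_1)=\Pi_2^{-1}(A_2)$. The paper does no faithfulness bookkeeping at all. It treats $\pi_{2,1}\circ\pi\in\Pi_2$ as immediately giving $F\in\Pi_2^{-1}(A_2)$, consistent with its later remark that faithfulness is encoded in the alignment classes.

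The extra step in your ``main obstacle'' paragraph would fail.

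\textbf{Outputs need not coincide.} Mutual abstraction does not force the output solutions of $A_1$ and $A_2$ to be equal. \oneeqref{eq:obs-consistency} only says each high-level value is the image of low-level values under the alignment, and alignments act on values and may relabel them. For example, let $A_2$ be $A_1$ with its output shifted by a nonzero constant $c$. Align them by $x\mapsto x+c$ and $x\mapsto x-c$, translating interventions accordingly. The two models abstract each other, yet their outputs differ by $c$ on all of $S$.

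\textbf{The composition bound does not help.} Your appeal to \propref{prop:interp-compose} needs $A_2$ to be an $\eta$-faithful interpretation of $A_1$, which is not a hypothesis. Even granting it, the bound is only $2\eta$, and fixing a common tolerance does not recover the lost factor.

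\textbf{The hypotheses are insufficient as stated.} If faithfulness is enforced literally at one level $\eta$ for both implementation sets, no argument can close the gap. In the shift example, take $\Pi_2=\pi_{2,1}\circ\Pi_1$; both closure conditions hold because $\pi_{1,2}\circ\pi_{2,1}$ is the identity. Take any $F$ abstracted by $A_1$ whose output is within $\eta$ of $A_1$'s output but not of $A_2$'s. Then $F\in\Pi_1^{-1}(A_1)\setminus\Pi_2^{-1}(A_2)$.

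\textbf{The repair is an assumption, not a cleverer argument.} One option is to adopt the paper's convention that membership of an alignment in $\Pi_2$ already certifies a faithful implementation, so the closure hypothesis carries faithfulness with it. The other is to assume that $A_1$ and $A_2$ have identical output solutions on $S$, in which case $\eta$-faithfulness transfers from $A_1$ to $A_2$ with no loss.
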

\begin{proof}
We show that $\Pi_1^{-1}(A_1) \subset \Pi_2^{-1}(A_2)$. Take any $F \in \Pi_1^{-1}(A_1)$. By \defref{def:implementation}, there exists $\pi_1 \in\Pi_1$ that forms an abstraction from $(\sV, U, F, \succ) \to (\sV_1^A, U, \sF_1^A, \succ)$. By \lemref{lem:composition-alignments}, $\pi_{2,1}\circ \pi_1$ is an alignment that forms an abstraction from $(\sV, U, F, \succ) \to (\sV_2^A, U, \sF_2^A, \succ)$. Since $\pi_{2,1}\circ\pi_1 \in \Pi_2^{-1}(A_2)$, this implies that $F \in \Pi_2^{-1}(A_2)$. Applying the same argument in the other direction, $\Pi_2^{-1}(A_2)\subset \Pi_1^{-1}(A_1)$. Therefore, $\Pi_1^{-1}(A_1) = \Pi_2^{-1}(A_2)$ and by \defref{def:interp-equivalence}, $A_1,A_2$ are 0-interpretively equivalent.
\end{proof}
\begin{prop}
Let $\Pi_i^{-1}(A_i)$ be closed. Suppose $A_1,A_2$ are 0-interpretively equivalent. Suppose there exists $\pi_i \in \Pi_i$ that each admits a \newterm{section}, $s_i: \sV_i^A \mapsto \powerset(\sV_i)$, such that $\pi_i(s_i(\vv)) = \vv$, for any $\vv\in\sV_i^A$. Suppose that there exists some $\pi_1',\pi_2'$ such that $\pi_2' \circ s_1 \circ \pi_1 \in \Pi_2$ and $\pi_1' \circ s_2 \circ \pi_2 \in \Pi_1$. Then, $A_1$ is an abstraction of $A_2$ and vice versa with alignment maps $\pi_2' \circ s_1, \pi_1' \circ s_2$.
\end{prop}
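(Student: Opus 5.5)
The argument is the converse of \propref{prop:interp-abstract-suff}: instead of transporting implementations through abstractions, we extract abstractions between $A_1$ and $A_2$ from the equality of their implementation sets. There are three steps.

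\textbf{Step 1: turn $0$-equivalence into set equality.} By \defref{def:interp-equivalence}, $0$-interpretive equivalence means $\Haus(\Pi_1^{-1}(A_1),\Pi_2^{-1}(A_2))=0$. For closed sets, vanishing Hausdorff distance gives equality: every $F\in\Pi_1^{-1}(A_1)$ is at distance $0$ from $\Pi_2^{-1}(A_2)$, hence lies in its closure, which is the set itself. The closedness hypothesis is used exactly here. Because $d$ is only a pseudometric, I will read ``closed'' as closed in the induced topology; equality then holds modulo $d$-indistinguishable implementations. I would remark on this caveat and, if needed, identify such implementations.

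\textbf{Step 2: produce a single implementation aligned to both interpretations.} Fix $\pi_1\in\Pi_1$ realizing some implementation $F$, so $(\sV,U,F,\succ)$ abstracts to $A_1$ under $\pi_1$. By Step 1, $F\in\Pi_2^{-1}(A_2)$. Moreover, $\pi_2'\circ s_1\circ\pi_1\in\Pi_2$ by hypothesis, so I take this to be an alignment witnessing $F$ as an implementation of $A_2$.

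\textbf{Step 3: show $\pi_2'\circ s_1$ is an alignment from $A_1$ to $A_2$.} The section $s_1$ lifts each variable of $A_1$ to a set of variables of $K$. Applying $\pi_2'$ to that set lands in $\sV_2^A$, so $\pi_2'\circ s_1:\powerset(\sV_1^A)\to\sV_2^A$ (extending $s_1$ to subsets by unions). Three conditions need checking.
\begin{itemize}
\item \emph{Surjectivity.} I would derive it from surjectivity of $\pi_2'\circ s_1\circ\pi_1$ together with $\pi_1(s_1(\vv))=\vv$.
\item \emph{Observational consistency} \oneeqref{eq:obs-consistency}. Take a high-level variable $\tilde\vv\in\sV_2^A$. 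Its value is $(\pi_2'\circ s_1\circ\pi_1)$ applied to low-level solutions of $F$. Group those solutions through $\pi_1$ into values of $A_1$-variables; this is \oneeqref{eq:obs-consistency} for $\pi_1$. The expression then becomes $(\pi_2'\circ s_1)$ applied to $A_1$'s solutions.
\item \emph{Intervention consistency} \oneeqref{eq:inter-consistency}. Define $\omega$ on $\gI(\sV_1^A)$ by lifting each intervention through $s_1$ to $\gI(\sV)$, then pushing it forward with $\omega_{2}$, the intervention map of $\pi_2'\circ s_1\circ\pi_1$. Commutation should follow from commutation for $\pi_1$ and for the composite, using $\pi_1\circ s_1=\mathrm{id}$.
\end{itemize}
The symmetric argument with $\pi_1'\circ s_2$ gives the other direction.

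The main obstacle is the consistency check in Step 3. The identity $\pi_1\circ s_1=\mathrm{id}$ is one-sided, and $s_1\circ\pi_1$ is not the identity on $\powerset(\sV)$. So I must argue that, in the composite $\pi_2'\circ s_1\circ\pi_1$, replacing the original low-level variable set by the representative set $s_1(\pi_1(\cdot))$ does not change the $A_2$ values. My first attempt would use observational consistency of $\pi_1$: any two low-level sets with the same $\pi_1$-image determine the same $A_1$ value, and therefore, through the composite alignment, the same $A_2$ value. Failing that, I would add this as an explicit well-definedness convention, in the style of the ``abuse of notation'' remark after \lemref{lem:composition-alignments}.
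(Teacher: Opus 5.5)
Your proposal follows the paper's proof essentially step for step. Closedness plus vanishing Hausdorff distance gives $\Pi_1^{-1}(A_1)=\Pi_2^{-1}(A_2)$. Observational consistency of $\pi_2'\circ s_1$ comes from reading $\pi_2'\circ s_1\circ\pi_1\in\Pi_2$ as an alignment from $K$ to $A_2$ and collapsing the inner union via \oneeqref{eq:obs-consistency} for $\pi_1$. Intervention consistency comes from lifting an $A_1$-intervention to $\gI(\sV)$ and pushing it forward through an intervention map, which is the paper's $\omega_2'\circ t_1$.

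The one point to tighten is the lift: it must be a chosen preimage under the surjective intervention map attached to $\pi_1$, which is what the paper's section $t_1$ supplies. It cannot be induced by $s_1$, which acts only on variables.
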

\begin{proof}
Since $\Pi_i^{-1}(A_i)$ is closed, \defref{def:interp-equivalence} implies that $\Pi_1^{-1}(A_1) = \Pi_2^{-1}(A_2)$. For brevity, we denote $\pi_{2,1} \triangleq \pi_2' \circ s_1$. Note that $\pi_{2,1} : \sV_1^A \to \sV_2^A$, we can easily extend this to subsets $V \subset \sV_1^A$ by 
\begin{equation}\label{eq:subset-mod}
  \pi_{2,1}(V) \triangleq \pi_2'\left(\bigcup_{v \in V} s_1(v)\right).
\end{equation}
Now, we refer to $\pi_{2,1}$ as \oneeqref{eq:subset-mod}.
We first show that $\pi_{2,1}$ satisfies \oneeqref{eq:obs-consistency}. Fix $\vv_2 \in \sV_2^A$. By \defref{def:causal-model}, 
\begin{equation}\label{eq:def-causal-variable}
\vv_2 \triangleq f_{\vv_2}(\parents(\vv_2), U).
\end{equation}
By assumption, $\pi_{2,1} \circ \pi_1 \in \Pi_2$. Thus, $\pi_{2,1}\circ \pi_1$ is an alignment map that enables an abstraction from $K \to A_2$, we can rewrite \oneeqref{eq:def-causal-variable} as 
\begin{align}
\vv_2 &= (\pi_{2,1} \circ \pi_1) \left(\bigcup_{\vv_k \in (\pi_{2,1} \circ \pi_1)^{-1}(\vv_2)} f_{\vv_k}(\parents(\vv_k), U)\right), \\ 
&= (\pi_{2,1} \circ \pi_1) \left(\bigcup_{\vv_k \in \pi_{2,1}^{-1}(\vv_2)} \bigcup_{\vv \in \pi_1^{-1}(\vv_k)} f_{\vv}(\parents(\vv), U)\right), \\ 
&= \pi_{2,1} \left(\pi_1 \left(\bigcup_{\vv_k \in \pi_{2,1}^{-1}(\vv_2)} \bigcup_{\vv \in \pi_1^{-1}(\vv_k)} f_{\vv}(\parents(\vv), U)\right)\right), \\
&= \pi_{2,1} \left( \bigcup_{\vv_k \in \pi_{2,1}^{-1}(\vv_2)} \pi_1 \left( \bigcup_{\vv \in \pi_1^{-1}(\vv_k)} f_{\vv}(\parents(\vv), U)  \right)\right), \\
&= \pi_{2,1} \left( \bigcup_{\vv_k \in \pi_{2,1}^{-1}(\vv_2)} f_{\vv_k}(\parents(\vv_k), U) \right).
\end{align}
The last equality follows from the fact that $\pi_1$ is an alignment. Therefore, $\pi_{2,1}$ satisfies \oneeqref{eq:obs-consistency}. By the same argument, this also holds for $\pi_{1,2}$. It remains to check \oneeqref{eq:inter-consistency}. Since $\pi_2', \pi_1'$ are alignments, they admit $\omega_2',\omega_1'$ that satisfy \oneeqref{eq:inter-consistency}. Let $t_2,t_1$ be sections of $\omega_2',\omega_1'$, respectively. We claim that $\omega_{2,1} \triangleq \omega_2' \circ t_1$ satisfies \oneeqref{eq:inter-consistency}. Since $\pi_{2,1}\circ\pi_1$ is an alignment, and take $\vv_k \in \sV$,
\begin{align*}
(\pi_{2,1} \circ \pi_1)(\doc(\vv_k \gets f)) &= \doc((\pi_{2,1} \circ \pi_1)(\vv_k) \gets (\omega_{2,1}\circ \omega_1')(f)), \\
\pi_{2,1}(\pi_{1}(\doc(\vv_k \gets f))) &= \doc(\pi_{2,1}(\pi_{1}(\vv_k))\gets \omega_{2,1}(\omega_1'(f))), \\
\pi_{2,1}( \doc(\pi_1(\vv_k) \gets \omega_1'(f)) ) &= \doc(\pi_{2,1}(\pi_{1}(\vv_k))\gets \omega_{2,1}(\omega_1'(f))).
\end{align*}
$\doc(\pi_1(\vv_k) \gets \omega_1'(f)) \in \gI(\sV_1^A)$ and $\pi_1(\vv_k) \in \sV_1^A$. Thus, through relabeling, $\vv_k^1 \triangleq \pi_1(\vv_k)$ and $f_1 \triangleq \omega_1'(f)$, 
\[
\pi_{2,1}( \doc(\vv_k^1 \gets f_1 )) = \doc(\pi_{2,1}(\vv_k^1)\gets \omega_{2,1}(f_1)).
\]
And, \oneeqref{eq:inter-consistency} holds.
\end{proof}


\section{Proofs for Representation Similarity and Interpretive Equivalence}\label{app:repr-proof}

Suppose we have two circuits over a shared task $S$ for models $h_{1}, h_{2}$ defined over different neural architectures. These circuits we notate as $K_1 \triangleq (\sV_1 , U, \sF_1, \succ)$ and $K_2 \triangleq (\sV_2 , U, \sF_2, \succ)$, respectively. Let us also assume each circuit admits an $(L_i, \delta_i)$-representation $R_i \triangleq (\sH_i, U, \sF_i, \succ)$ through alignment maps $\rho_i$. First, we define a notion of distance between these two representations. We note that since the representations themselves are a function of the input (recall \defref{def:causal-model}), we are essentially defining a distance in a function space. 

\begin{defn}[Representation Similarity]\label{def:repr-sim}
Denote by $H_i \triangleq \concat{(\sH_i^{\sF_i})}$ to be the direct sum (concatenation) of all hidden variables in $\sH_i$. Then, representation similarity\footnote{Our formulation is inspired by~\citeposs{chan2024on} notion of representation alignment between language encoders. Although \defref{def:repr-sim} is not a true similarity, this naming stays consistent with the literature~\citep{kornblith_similarity_2019}.} between $R_1$ and $R_2$ is
\begin{equation}
\reprd(R_1, R_2) \triangleq \max\left( \inf_{\norm{A}_\op \leq 1}\norm{ A H_1 - H_2} , \inf_{\norm{B}_\op \leq 1} \norm{  H_1 - B H_2}\right),
\end{equation}
where $A,B$ are linear operators and $\norm{\cdot}$ is a direct-sum norm.
\end{defn}
We first show that representation similarity (\defref{def:repr-sim}) is a pseudometric. 
\begin{lem}\label{lem:triangle-repr}
For any representations $R_1, R_2, R_3$, $\reprd$ satisfies 
\begin{enumerate}[nosep]
\item $\reprd(R_1, R_1) = 0$
\item $\reprd(R_1, R_2) = \reprd(R_2, R_2)$
\item $\reprd(R_1, R_3) \leq \reprd(R_1, R_2) + \reprd(R_2, R_3)$
\end{enumerate}
\end{lem}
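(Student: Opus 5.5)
The plan is to verify the three properties directly from \defref{def:repr-sim}. I read item 2 as the intended symmetry statement $\reprd(R_1,R_2)=\reprd(R_2,R_1)$; as literally written it would say $\reprd(R_1,R_2)=0$, which is false in general. Throughout, $H_i$ is a function $S\to\R^{N_i}$, and a linear operator acts on it pointwise, $(AH_1)(\vu)\triangleq A\,H_1(\vu)$. I will fix the norm on the function space to be built from a pointwise Euclidean norm, for instance $\norm{G}=\big(\sum_{\vu\in S}\lVert G(\vu)\rVert_2^2\big)^{1/2}$ or a supremum over $S$. The one property needed is: $\norm{A}_\op\le 1$ implies $\norm{AG}\le\norm{G}$ for every $G$. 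This holds for any such $L^p$ or sup construction, because the bound holds pointwise and is then preserved by aggregating over $S$.

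\textbf{Item 1.} Take $A=B=I$, which has operator norm $1$. Both residuals $\norm{H_1-H_1}$ vanish, so each infimum is $0$; since the quantities are nonnegative, $\reprd(R_1,R_1)=0$.

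\textbf{Item 2.} Write $a(R_1,R_2)\triangleq\inf_{\norm{A}_\op\le1}\norm{AH_1-H_2}$ and $b(R_1,R_2)\triangleq\inf_{\norm{B}_\op\le1}\norm{H_1-BH_2}$. Swapping the arguments sends the first term to $\inf_{\norm{A}_\op\le1}\norm{AH_2-H_1}$, which is exactly $b(R_1,R_2)$ because $\norm{x-y}=\norm{y-x}$. By the same reasoning the second term becomes $a(R_1,R_2)$. The maximum of the two terms is therefore unchanged.

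\textbf{Item 3 (the main step).} I will prove the triangle inequality separately for $a$ and $b$, and then take maxima.

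For $a$, fix $\epsilon>0$. Choose contractions $A_{12},A_{23}$ with $\norm{A_{12}H_1-H_2}\le a(R_1,R_2)+\epsilon$ and $\norm{A_{23}H_2-H_3}\le a(R_2,R_3)+\epsilon$. The composition $A_{23}A_{12}$ is a contraction, by submultiplicativity of the operator norm. Then
\[
\norm{A_{23}A_{12}H_1-H_3}\le\norm{A_{23}(A_{12}H_1-H_2)}+\norm{A_{23}H_2-H_3}\le\norm{A_{12}H_1-H_2}+\norm{A_{23}H_2-H_3}.
\]
The second inequality uses the contraction property recorded above.

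For $b$, the argument is symmetric. Choose $B_{12},B_{23}$ attaining $b(R_1,R_2)$ and $b(R_2,R_3)$ up to $\epsilon$, and use $B_{12}B_{23}$:
\[
\norm{H_1-B_{12}B_{23}H_3}\le\norm{H_1-B_{12}H_2}+\norm{B_{12}(H_2-B_{23}H_3)}.
\]
Letting $\epsilon\to0$ gives $a(R_1,R_3)\le a(R_1,R_2)+a(R_2,R_3)$, and likewise for $b$. Each right-hand term is at most the corresponding $\reprd$, so $\max(a,b)(R_1,R_3)\le\reprd(R_1,R_2)+\reprd(R_2,R_3)$.

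The only real obstacle is the contraction step $\norm{A_{23}G}\le\norm{G}$ for function-valued $G$. This is where the unspecified ``direct-sum norm'' must be pinned down to a pointwise-Euclidean aggregate, as above. The remaining points (dimension matching of the maps between $\R^{N_1},\R^{N_2},\R^{N_3}$, and the infima being over nonempty sets containing $0$) are routine.
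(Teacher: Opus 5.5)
Your proof is correct and takes essentially the same approach as the paper. Both compose (near-)optimal contractions and combine the triangle inequality with $\norm{\phi G}\le\norm{\phi}_\op\norm{G}$. Your version is more complete than the paper's: the paper writes out only the $\inf_B\norm{H_1-BH_2}$ direction, omits the final maximum, and dismisses items 1--2 as trivial. Your reading of item 2 as the symmetry $\reprd(R_1,R_2)=\reprd(R_2,R_1)$ is the intended one.
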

\begin{proof} The first two properties are trivial. So, it remains to show triangle inequality. Let $H_i$ be the concatenation of hidden variables in $R_i$. Let $\phi_{1,2}, \phi_{2,3}$ be arbitrary operators where $\norm{\phi_{1,2}}_\op, \norm{\phi_{2,3}}_\op \leq 1$ that map $H_2 \to H_1$ and $H_3 \to H_2$, respectively. Then, 
\begin{align*}
\norm{H_1 - \phi_{1,2} \circ \phi_{2,3} H_3} &= \norm{H_1 - \phi_{1,2} H_2 + \phi_{1,2} H_2 - \phi_{1,2} \circ \phi_{2,3} H_3}, \\
&\leq \norm{H_1 - \phi_{1,2} H_2} + \norm{\phi_{1,2} H_2 - \phi_{1,2} \circ \phi_{2,3} H_3}, \\
&\leq \norm{H_1 - \phi_{1,2} H_2} + \norm{\phi_{1,2}}_\op \norm{H_2 - \phi_{2,3} H_3} \\ 
&\leq \norm{H_1 - \phi_{1,2} H_2} + \norm{H_2 - \phi_{2,3}H_3}.
\end{align*}
Taking infimum over $\phi_{1,2}, \phi_{2,3}$ gives the desired result. 
\end{proof}

\begin{defn}
  Let $(Z, d)$ be a (pseudo)metric space. Let $S \subset Z$. The \newterm{diameter} of
  $S$, denoted $\diam(S)$ is defined as
  \begin{equation}
    \diam(S) \coloneqq \sup_{a,b \in S} d(a,b).
  \end{equation}
\end{defn}

\begin{lem}\label{lem:hausdorff}
  Let $(Z,d)$ be a (pseudo)metric space and $\Haus$ be its Hausdorff
  distance. Then, for $A, B \subset Z$ and $a \in A, b\in B$,
  \[
    \Haus{(A, B)} \leq \diam(A) + \diam(B) + d(a,b).
  \]
\end{lem}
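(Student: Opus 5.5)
The plan is to bound the two one-sided excess distances that make up $\Haus(A,B)$ separately, using the triangle inequality of the (pseudo)metric $d$ routed through the fixed anchor points $a \in A$ and $b \in B$. By symmetry it suffices to bound $\sup_{x \in A} \tilde d(x, B)$, where $\tilde d(x,B) = \inf_{y \in B} d(x,y)$. The other direction, $\sup_{y \in B} \tilde d(y, A)$, follows by exchanging the roles of $(A,a)$ and $(B,b)$. Since the right-hand side $\diam(A) + \diam(B) + d(a,b)$ is symmetric in this exchange, the maximum of the two sides obeys the same bound.

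For the first direction, fix an arbitrary $x \in A$. Because $b \in B$, the infimum satisfies $\tilde d(x,B) \le d(x,b)$. Two applications of the triangle inequality give
\[
d(x,b) \le d(x,a) + d(a,b) \le \diam(A) + d(a,b).
\]
Taking the supremum over $x \in A$ yields $\sup_{x\in A}\tilde d(x,B) \le \diam(A) + d(a,b)$. This is in fact sharper than required. Adding the nonnegative term $\diam(B)$ then recovers the stated bound. The symmetric argument gives $\sup_{y\in B}\tilde d(y,A) \le \diam(B) + d(a,b)$.

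Combining the two directions gives
\[
\Haus(A,B) = \max\bigl(\sup_{x\in A}\tilde d(x,B),\ \sup_{y\in B}\tilde d(y,A)\bigr) \le \max(\diam(A),\diam(B)) + d(a,b) \le \diam(A)+\diam(B)+d(a,b).
\]

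There is no real obstacle here. The only care points are the following.
\begin{itemize}
\item Only the pseudometric axioms are used, namely nonnegativity, symmetry and the triangle inequality. The argument therefore applies to the implementation pseudometric of \oneeqref{eq:dist}.
\item If either diameter is infinite, the inequality holds trivially.
\end{itemize}
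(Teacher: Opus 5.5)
Your proof is correct and is essentially the paper's argument: you bound each one-sided term by $\tilde d(x,B)\le d(x,b)\le d(x,a)+d(a,b)\le \diam(A)+d(a,b)$ and then use symmetry. Your remark that the argument actually gives the sharper bound $\max(\diam(A),\diam(B))+d(a,b)$ is accurate; the paper's proof also yields this implicitly but does not state it.
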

\begin{proof}
  Fix any $a' \in A$. Then,
  \begin{align*}
    \inf_{b' \in B} d(a', b') &\leq d(a', b), \\
    &\leq d(a', a) + d(a, b), \\
    &\leq \diam(A) + d(a, b).
  \end{align*}
  By symmetry, for any $b' \in A$, it follows that $\inf_{a' \in A}
  d(a',b') \leq \diam(B) + d(a,b)$. Therefore, by the definition of
  the Hausdorff distance, we yield the desired bound.
\end{proof}

\begin{defn}\label{def:rho-lipschitz}
Let $K_1\triangleq (\sV, U, F_1, \cdot), K_2\triangleq(\sV, U, F_2, \cdot)$ be two arbitrary circuits defined over variables $\sV$. Suppose some pseudometric $d: \sV \times \sV \to \R^{\geq 0}$. Let $R \triangleq (\sH, U, \cdot, \prec)$ be representation spaces and $\rho: \powerset(\sV) \to \sH$ an alignment map. Denote $R_i$ the representations of $K_i$ constructed through $\rho$. We say that $\rho$ is Lipschitz continuous with Lipschitz constant $\Lip(\rho)$ if  
\begin{equation}
\reprd\left(R_1,R_2\right) \leq \Lip(\rho)d(F_1, F_2).
\end{equation}
\end{defn}

\begin{lem}\label{lem:repr-comp}
For $i \in \{1,2\}$ let $R_i$ be $(\cdot, \delta_i)$-representations for models $h_i$. Suppose that $\norm{h_1 - h_2} < \Delta$. Then, $\reprd(R_1, R_2) \leq \delta_1 + \delta_2 + \Delta$. 
\end{lem}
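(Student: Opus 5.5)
The plan is to route both directions of $\reprd$ through the model outputs $h_1,h_2$. We then use the two defining inequalities of an $(L_i,\delta_i)$-representation in \oneeqref{eq:s2n} together with the contraction property of operators of norm at most one. Let $H_i$ be the concatenation of the hidden variables of $R_i$. Stacking the per-layer maps from \defref{def:representation} gives a block operator $\mathbf{A}_i$ sending $H_i \mapsto (h_i,\dots,h_i)$ and a block operator $\mathbf{B}_i$ sending $h_i \mapsto H_i$ up to error $\delta_i$. Under the direct-sum norm, we will take the norm so that the block approximation errors are still bounded by $\delta_i$ (for example a max/sup-type direct-sum norm), and we will normalise so that the stacked operators still have operator norm at most one. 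In practice it suffices to work with the single readout $h_i$ and its lift $\mathbf{B}_i h_i \approx H_i$.

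\textbf{Step 1 (first direction).} We bound $\inf_{\norm{A}_\op\le 1}\norm{AH_1 - H_2}$ by exhibiting the candidate $A \triangleq \mathbf{B}_2\mathbf{A}_1$. It has operator norm at most $\norm{\mathbf{B}_2}_\op\norm{\mathbf{A}_1}_\op\le 1$. We then insert intermediate terms and apply the triangle inequality:
\begin{equation*}
\norm{\mathbf{B}_2\mathbf{A}_1 H_1 - H_2} \le \norm{\mathbf{B}_2(\mathbf{A}_1H_1 - h_1)} + \norm{\mathbf{B}_2(h_1 - h_2)} + \norm{\mathbf{B}_2 h_2 - H_2}.
\end{equation*}
The three terms are bounded as follows. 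The first is at most $\delta_1$, by the contraction property and the signal condition for $R_1$. The second is at most $\Delta$, by the contraction property and assumption (2) on $\norm{h_1-h_2}$. The third is at most $\delta_2$, by the noise condition for $R_2$. This gives the bound $\delta_1+\delta_2+\Delta$.

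\textbf{Step 2 (second direction).} By symmetry we take $B \triangleq \mathbf{B}_1\mathbf{A}_2$ and obtain $\norm{H_1 - BH_2}\le \delta_1+\delta_2+\Delta$ in the same way. Taking the maximum of the two directions in \defref{def:repr-sim} yields the claim.

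\textbf{Main obstacle.} The delicate point is the bookkeeping in passing from the per-layer maps $A_k,B_k$ to a single operator on the concatenation $H_i$. Two things must survive this passage: the operator-norm bound of one, and the per-layer error bound $\delta_i$. I would handle it by choosing the direct-sum norm to be the maximum over layers, so that block-diagonal stacking preserves both bounds. Alternatively, I would read only off the final layer $\vv_{L}=\vv_\tout$ to go from $H_1$ to $h_1$, and use the block lift $(B_1,\dots,B_L)$ for the return direction, whose norm is still at most one under the max norm. If the paper intends a sum-type norm, constants depending on $L_i$ would appear. In that case I would absorb them by interpreting $\delta_i$ as the aggregate error in the direct-sum norm, which is consistent with how \defref{def:representation} leaves the norm "to be specified."
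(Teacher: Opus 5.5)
Your proposal is correct and follows essentially the same argument as the paper. Both build single operators $A_i, B_i$ on the concatenation $H_i$ with operator norm at most one and errors below $\delta_i$, then bound $\norm{B_2A_1H_1 - H_2}$ and $\norm{H_1 - B_1A_2H_2}$ by the triangle inequality routed through $h_1, h_2$.

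The only difference is the bookkeeping you flagged. The paper averages the per-layer readouts, $A_iH_i = \frac{1}{L_i}\sum_k A_{i,k}H_{i,k}$, under the $\ell_\infty$ direct-sum norm, so that $A_i$ lands in $\R^{|\Sigma|}$ and the composition $B_2A_1$ type-checks. Your stacked map into $(h_i,\dots,h_i)$ would need one more norm-one contraction to achieve this. Your final-layer readout, using $\vv_L = \vv_\tout$, is an equally valid and slightly cleaner alternative.
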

\begin{proof}
Let $H_i$ be the concatenation of all hidden variables in $R_i$. Since $R_i$ is a $(L_i,\delta_i)$-representation, there must exist maps $A_i, B_i$ such that 
\[
  \norm{A_i H_i - h_i} < \delta_i \qquad \text{and}\qquad \norm{H_i - B_ih_i} < \delta_i.
\]
To see this, we know that for each hidden layer $k$, there exists $A_{i,k}$ such that $\norm{A_{i,k} H_{i,k} - h_i} < \delta_i$. Then, 
\begin{equation}\label{eq:dsum}
  L_i\delta_i \geq \sum_{k=1}^{L_i} \norm{A_{i,k} H_{i,k} - h_i} \geq \norm{\sum_{k=1}^{L_i} (A_{i,k}H_{i,k} - h_i)} = \norm{\sum_{k=1}^{L_i} A_{i,k}H_{i,k} - L_i h_i}.
\end{equation}
Define a single (block diagonal) linear map $A_i$ that acts on $H_i = (H_{i,1}, \ldots, H_{i,L_i})$ where
\[
  \norm{A_iH_i} \triangleq \frac{1}{L_i}\sum_{k=1}^{L_i} \norm{A_{i,k}H_{i,k}}.
\]
With the $\ell_\infty$ direct-sum norm on $\bigoplus_k \R^{n_k}$ and $\norm{A_i}_\op \leq 1$. Substituting $A_i$ into \oneeqref{eq:dsum}, we have that $\norm{A_i H_i - h_i} < \delta_i$. Repeating same construction for $B_i$ yields the ``decoder'' inequality. 

Consider $\inf_{A_0}\norm{A_0 H_1 - H_2}$, 
\begin{align*}
\inf_{A_0}\norm{A_0 H_1 - H_2} &\leq \norm{B_2 A_1 H_1 - H_2}, \\
&\leq \norm{B_2 A_1 H_1 - B_2 h_2} + \norm{B_2 h_2 - H_2}, \\ 
&\leq \norm{B_2}_\op \left(\norm{A_1 H_1 - h_1} + \norm{h_1 - h_2}\right) + \delta_2, \\
&\leq \delta_1 + \delta_2 + \Delta.
\end{align*}
The same argument also holds in the other direction:
\begin{align*}
\inf_{A_0}\norm{H_1 - A_0H_2} &\leq \norm{H_1 - B_1A_2H_2}, \\
&\leq \norm{H_1 - B_1h_1} + \norm{B_1h_1 - B_1 A_2H_2}, \\
&\leq \delta_1 + \norm{B_1}_\op\left(\norm{h_1 - h_2} + \norm{h_2 - A_2 H_2}\right), \\
&\leq \delta_1 + \delta_2 + \Delta.
\end{align*}
By \defref{def:repr-sim}, $\reprd(R_1, R_2) \leq \delta_1 + \delta_2 + \Delta$.
\end{proof}

\subsection{Proof of \mainresultref{thm:sufficient}}
For $i \in \{1,2\}$, suppose that $A_i$ is an $\eta_i$-faithful interpretation of $K_i$. Let $\Pi, \Pi_\star$ be alignment classes that map $K_1$'s variables onto $A_1,A_2$'s variables, respectively. Assume that circuits admit $L_i$-layered representations through an alignment map $\rho_i$. Let $F, \tilde F \in \Pi^{-1}(A_1)$, then $\sV^F: \Sigma^\star \to \R^n$ for some $n \in \Z^+$ (\defref{def:circuit-informal}).

\begin{asmp}\label{asmp:impl-dist-form}
 Let $\Phi : \R^n \to \R^m$ and assume the implementation distance $d$ is in the form \[
d(F, \tilde F) \triangleq \lVert{\Phi \sV^{F} - \Phi \sV^{\tilde F}}\rVert,
\]
for some function norm.
\end{asmp}
$\Phi$ is a \newterm{distillation map} that compiles all of the activations of $\sV^{F}$ into a tractable representation. In a sense, $\Phi$ exposes the measurable part of $\sV^F$. Denote $R_{F}$ to be the representations of $F$ constructed through the alignment map $\rho_1$. Since $R_F$ is a deterministic causal model (\defref{def:representation}), we denote $\sH^{F}: \Sigma^\star \to \R^p$ to be the solution of all variables in $R_F$ given some input setting. In other words, $\sH^F$ is the concatenation of all hidden representations of $R_F$. This is the same construction used in \defref{def:repr-sim}.
\begin{asmp}\label{asmp:readout-dist}
Assume there exists a 1-Lipschitz map $\Psi: \R^p \to \R^m$ such that 
\begin{equation}
  \lVert \Psi \sH^F - \Psi \sH^{\tilde F} \rVert \leq \reprd(R_F, R_{\tilde F}) \qquad \text{and}\qquad  
  \norm{\Phi  \sV^{F} - \Psi \sH^F} < \omega.
\end{equation}
\end{asmp}
In other words, the measurable part of our circuit is approximately determined by its representation. Next, we assume that there exists at least one implementation in $\Pi_\star^{-1}(A_2)$ that admits a reasonable representation under $\rho_1$. Since we only enforce that $\rho_1$ is a valid abstraction map for $K_1$, without this assumption, one could construct an adversarial implementation of $A_2$ whose causal variables are completely discarded by $\rho_1$.
\begin{asmp}\label{asmp:realizable}
There exists $F^\star \in \Pi_\star^{-1}(A_2)$ such that its representation under $\rho_1$, $R_{F^\star}$, is an $(L_1, \delta_1)$-representation. 
\end{asmp}
We are now ready to prove the main result.
\begin{mainthm*}
Let $\Pi$ and $\Pi_\star$ be alignment classes that map $K_1$'s variables into $A_1, A_2$'s variables, respectively. Then, the approximate interpretive equivalence of $A_1, A_2$ under alignments $\Pi^{-1}, \Pi_\star^{-1}$ is
\begin{equation}
\interpd(A_1, A_2) \leq \kappa(A_1, K_1, \Pi) + \kappa(A_2, K_1, \Pi_\star) + 2\omega + \reprd(R_1, R_2) + \delta_1 + \delta_2 + \Delta.
\end{equation}
\end{mainthm*}
\begin{proof}
By \lemref{lem:hausdorff}, for any $F \in \Pi^{-1}(A_1), \tilde F \in \Pi_\star^{-1}(A_2)$,
\[
  \interpd(A_1,A_2) \leq \kappa(A_1,K_1,\Pi) + \kappa(A_2,K_1,\Pi_\star) + d(F, \tilde F).
\]
So it remains to upper bound $d(F, \tilde F)$ for a single pair $(F, \tilde F)$. Let us choose $F$ to be the circuit $K_1$ which has representations $R_1$. By \asmpref{asmp:realizable}, choose $\tilde F \in \Pi_\star^{-1}(A_2)$ that admits an $(L_1, \delta_1)$-representation (which we denote as $R_{\tilde F}$) under the alignment $\rho_1$. By \asmpref{asmp:impl-dist-form},
\[
  d(F, \tilde F) \triangleq \lVert \Phi \sV^{F} - \Phi \sV^{\tilde F}\rVert.
\]
We add and subtract $\Psi \sH^F, \Psi \sH^{\tilde F}$, 
\begin{align*}
d(F, \tilde F) &\leq \lVert \Phi \sV^F - \Psi \sH^F \rVert + \lVert \Psi \sH^F - \Psi \sH^{\tilde F} \rVert + \lVert \Psi \sH^{\tilde F} - \Phi \sV^{\tilde F}\rVert, \\ 
&\leq 2\omega + \lVert \Psi \sH^F - \Psi\sH^{\tilde F}\rVert.
\end{align*}
Since $\Psi$ is 1-Lipschitz $\lVert \Psi \sH^F - \Psi \sH^{\tilde F}\rVert \leq \reprd(R_1, R_{\tilde F})$, where $R_{\tilde F}$ is the causal model associated with $\sH^{\tilde F}$. By triangle inequality, 
\begin{align*}
\reprd(R_1, R_{\tilde F}) &\leq \reprd(R_1, R_2) + \reprd(R_2, R_{\tilde F}), \\
&\leq \reprd(R_1, R_2) + \delta_1 + \delta_2 + \Delta.
\end{align*}
The last inequality follows from \asmpref{asmp:realizable} and \lemref{lem:repr-comp}. Putting it all together, 
\[
  \interpd(A_1, A_2) \leq \kappa(A_1, K_1, \Pi) + \kappa(A_2, K_1, \Pi_\star) + 2\omega + \reprd(R_1, R_2) + \delta_1 + \delta_2 + \Delta.
\]
\end{proof}

\subsection{Proof of \mainresultref{thm:necessary}}

For $i \in \{1,2\}$, let us denote $A_i$ as $\eta_i$-faithful interpretations of $K_i$. 

\begin{mainthm*}
Let $\Pi$ and $\Pi_\star$ be an alignment that map $K_1$'s variables onto $A_1,A_2$'s variables. Suppose $A_1,A_2$ are $\eps$-approximate interpretively equivalent under the alignment classes $\Pi_\star, \Pi$. Then, 
\begin{align}
\reprd(R_1, R_2) &\leq \eps(\Lip(\rho_1) + 1) + \delta_1 + \delta_2 + \Delta.
\end{align}
\end{mainthm*}

\begin{proof}
Since $A_1, A_2$ are $\eps$-interpretively equivalent under the alignment cases $\Pi, \Pi_\star$, there exists $F_2' \in \Pi_\star(A_2)$ such that $d(F_1, F_2') \leq \eps$. Let $R_1, R_2'$ be the representations of $F_1, F_2'$ under the alignment $\rho_1$, respectively. Then, $\rho_1$ is Lipschitz continuous implies that 
\begin{equation}
\reprd(R_1, R_2') \leq \Lip(\rho_1) d(F_1, F_2') \leq \Lip(\rho_1)\eps.
\end{equation}

Since $R_1$ is an $(L_1, \delta_1)$-representation of $h_1$, there exists $A_1, B_1$ with $\norm{A_1}_\op, \norm{B_1}_\op \leq 1$ such that 
\[
  \norm{A_1 H_1 - h_1} \leq \delta_1 \qquad \text{and}\qquad \norm{H_1 - B_1h_1} \leq \delta_1,
\]
where $H_1$ is the concatenation of all variables in $R_1$. $\reprd(R_1, R_2') \leq \Lip(\rho_1)\eps$ implies there exists $A',B'$ such that 
\[
  \norm{A'H_1 - H_2'} \leq \Lip(\rho_1)\eps \qquad \text{and}\qquad \norm{H_1 - B'H_2'} \leq \Lip(\rho_1)\eps,
\]
where $H_2'$ is the concatenation of all variables in $R_2'$. We now bound the ``encoder'' and ``decoder'' error of $R_2'$ separately: 
\begin{align*}
\norm{A_1A' H_2' - h_2} &\leq \norm{A_1(A'H_2' - H_1)} + \norm{A_1H_1 - h_1} + \norm{h_1 - h_2}, \\
&\leq \norm{A_1}_\op\norm{A'H_2' - H_1} + \delta_1 + \Delta, \\
&\leq \Lip(\rho_1)\eps + \delta_1 + \Delta. 
\end{align*}
And, 
\begin{align*}
\norm{H_2' - B'B_1h_2} &\leq \norm{H_2' - B'H_1} + \norm{B'(H_1 - B_1h_1)} + \norm{B'B_1(h_1 - h_2)}, \\ 
&\leq \Lip(\rho_1)\eps + \delta_1 + \Delta. 
\end{align*}
Therefore, $R_2'$ is a $(L_1, \Lip(\rho_1)\eps + \delta_1 + \Delta)$-representation of $h_2$. Directly applying \lemref{lem:repr-comp}, $\reprd(R_2', R_2) \leq \Lip(\rho_1)\eps + \delta_1 + \delta_2 + \Delta$. By \lemref{lem:triangle-repr}, 
\[
  \reprd(R_1, R_2) \leq \reprd(R_1, R_2') + \reprd(R_2', R_2) \leq \eps(\Lip(\rho_1) + 1) + \delta_1 + \delta_2 + \Delta.
\]
\end{proof}

\textbf{Where is faithfulness in all of this?} Unintuitively, neither \mainresultref{thm:sufficient} nor~\ref{thm:necessary} explicitly contain the faithfulness of $A_1,A_2$ ($\eta_1, \eta_2$, respectively). Our insight is that faithfulness is implicitly encoded into the alignment classes $\Pi, \Pi_\star$. This is because any valid alignment must be consistent with the faithfulness of $A_1,A_2$ (see \defref{def:implementation}). In this way, $\eta_1,\eta_2$ determine which alignment classes are non-empty. This is why alignment classes form the crux of our constructions, because they describe both structural and behavioral constraints over abstractions. 

\section{Computing Interpretive Equivalence}\label{app:compute-ie}

For $i\in \{1,2\}$, let $A_i$ be an interpretation and $K_i$ be a circuit. Let $\Pi_i$ be an alignment class that maps $K_i$'s variables into $A_i$'s variables. We define probability spaces $(\Pi_i^{-1}(A_i), \cdot, \P_{A_i})$. Suppose $\rho_1,\rho_2$ are alignment maps. 

\begin{defn}[Congruity]\label{def:congruity}
Denote $F_1, F_1^\star \overset{\iid}\sim \P_1$, $F_2, F_2^\star \overset{\iid}\sim \P_2$. Let $R_1,R_1^\star,R_2,R_2^\star$ be their representations, respectively, under the alignment maps $\rho_1,\rho_2$.
\begin{equation}\label{eq:p2}
p_1 \triangleq \P[\reprd(R_1, R_1^\star) < \reprd(R_1, R_2)] \qquad p_2 \triangleq \P[\reprd(R_2, R_2^\star) < \reprd(R_2, R_1)],
\end{equation}
where $\P$ is the joint distribution over $\Pi_1^{-1}(A_1) \times \Pi_2^{-1}(A_2)$. The \newterm{congruity} between $\Pi_1^{-1}(A_1), \Pi_2^{-1}(A_2)$ is $1 - |p_1 + p_2 - 1|$.
\end{defn}

In one direction (say $p_1$), we sample two implementations from $\Pi_1^{-1}(A_1)$ and one from $\Pi_2^{-1}(A_2)$. $p_1$ is the probability that the two $A_1$ representations are more similar to each other than either does to the $A_2$ implementation. If $\Pi_1^{-1}(A_1) = \Pi_2^{-1}(A_2)$ and $\P_1 = \P_2$, then $p_1 = p_2 = \nicefrac{1}{2}$ and congruity is 1. In other words, $A_1,A_2$ are completely \textit{congruous} with respect to their representations.

\subsection{Proof of \mainresultref{thm:congruity}}
For clarity, the result is restated. 
\begin{mainthm*}
Let $p$ be the expected value of {\ReprDist} in \algref{alg:congruence}, $F_1, F_1^\star \overset{\iid}\sim \P_1$, and $F_2 \overset{\iid}\sim \P_2$. Let $R_1,R_1^\star,R_2$ be their representations under alignment maps $\rho_1,\rho_2$. Then,
\begin{equation}\label{eq:main-3}
p \geq 1 - \inf_{\eps > 0}\left( \P[\reprd( R_1, R_1^\star ) > \kappa(A_1, K_1, \Pi_1) + \eps] + \P[\reprd(R_1, R_2) < \interpd(A_1, A_2) - \eps]\right).
\end{equation}
\end{mainthm*}
\begin{proof}
Let $R_1, R_1^\star \overset{\iid}\sim \rho_1\P_1$ and $R_2 \overset{\iid}\sim \rho_2\P_2$, where $\rho_i\P_i$ is the pushforward of $\P_i$ under $\rho_i$. Consider the event $\reprd(R_1, R_1^\star) > \reprd(R_1, R_2)$. For arbitrary $\eps > 0$, this event occurs with probability 1 when $\reprd(R_1, R_2) > \kappa(A_1, K_1, \Pi_1) + \eps$ and $\reprd(R_1, R_3) < \interpd(A_1, A_2) - \eps$. Therefore, 
\begin{equation}
    p \geq 1 - \P[\reprd(R_1, R_1^\star) > \kappa(A_1, K_1, \Pi_1) + \eps] - \P[\reprd(R_1, R_2) < \interpd(A_1, A_2) - \eps],
\end{equation}
since $\eps >0$ was arbitrary, we yield the desired result. 
\end{proof}
\begin{cor}\label{cor:congruity}
Let $F_1,F_1^\star \overset{\iid}\sim \P_1, F_2, F_2^\star\overset{\iid}\sim \P_2$. Let $\tilde C$ be the congruity between $\Pi_1^{-1}(A_1), \Pi_2^{-1}(A_2)$. Define
\begin{align}
f(F_1, F_1^\star, F_2, F_2^\star) &\triangleq \inf_{\eps > 0}\Big[\P[\reprd( R_1, R_1^\star ) > \kappa(A_1, K_1, \Pi_1) + \eps] \nonumber\\ &\qquad+ 2\P[\reprd(R_1, R_2) < \interpd(A_1, A_2) - \eps]) \\ &\qquad+ (\P[\reprd( R_2, R_2^\star ) > \kappa(A_2, K_2, \Pi_2) + \eps])\Big]\nonumber.
\end{align}
Then, 
\begin{equation}
    2-f(F_1,F_1^\star,F_2, F_2^\star)\leq \tilde C \leq f(F_1,F_1^\star,F_2),
\end{equation}
when $f(F_1,F_1^\star,F_2,F_2^\star) \geq 1$.
\end{cor}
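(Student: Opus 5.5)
The plan is to apply \mainresultref{thm:congruity} once in each direction, add the two inequalities with a common $\eps$, and then convert the resulting bound on $p_1+p_2$ into a bound on $\tilde C = 1-|p_1+p_2-1|$ by splitting on the sign of $p_1+p_2-1$.

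\textbf{Step 1: a lower bound on $p_1$ and on $p_2$.} Applying \mainresultref{thm:congruity} with the roles of $A_1,A_2$ as stated gives, for every fixed $\eps>0$,
\[
p_1 \geq 1 - \P[\reprd(R_1,R_1^\star) > \kappa(A_1,K_1,\Pi_1)+\eps] - \P[\reprd(R_1,R_2) < \interpd(A_1,A_2)-\eps].
\]
Two small points need checking here. First, \mainresultref{thm:congruity} is stated for the non-strict comparison in \ReprDist, whereas $p_1$ in \defref{def:congruity} uses a strict inequality; I expect the same event-inclusion argument to go through unchanged, but if not I would note it as a tie-breaking technicality. Second, I keep $\eps$ fixed for now rather than taking the infimum separately.

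Swapping the indices gives the analogous bound for $p_2$. Its last term is $\P[\reprd(R_2,R_1) < \interpd(A_2,A_1)-\eps]$. By symmetry of $\reprd$ (\lemref{lem:triangle-repr}) and of the Hausdorff distance, this equals the cross term in the bound for $p_1$. This is why the factor $2$ appears in $f$.

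\textbf{Step 2: combine with a common $\eps$.} Adding the two bounds gives, for every $\eps>0$,
\[
p_1+p_2 \geq 2 - \bigl(\text{sum of the three probabilities, with the cross term counted twice}\bigr).
\]
Taking the infimum over $\eps$ of the right-hand side yields $p_1+p_2 \geq 2-f$. Using one common $\eps$ is exactly what matches the single infimum in the definition of $f$.

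\textbf{Step 3: convert to bounds on $\tilde C$.} The upper bound is immediate: since $p_1,p_2\in[0,1]$, we have $\tilde C\le 1\le f$ whenever $f\ge 1$.

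For the lower bound I would split into two cases.
\begin{itemize}[nosep, leftmargin=0.3in]
\item If $p_1+p_2<1$, then $\tilde C = p_1+p_2 \ge 2-f$ by Step 2, which is the claim.
\item If $p_1+p_2\ge 1$, then $\tilde C = 2-(p_1+p_2)$. The claim $\tilde C\ge 2-f$ is then equivalent to $p_1+p_2\le f$.
\end{itemize}
When $f\ge 2$ the second case is trivial, because $p_1+p_2\le 2\le f$. So the second case is only nontrivial when $1\le f<2$.

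\textbf{Main obstacle.} The difficulty is the case $p_1+p_2\ge 1$ with $1\le f<2$. There I need an \emph{upper} bound on $p_1+p_2$, and \mainresultref{thm:congruity} only supplies lower bounds.

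The first thing I would try is a complementary-event argument. The event $\{\reprd(R_1,R_1^\star) < \reprd(R_1,R_2)\}$ should fail when within-interpretation dispersion is large or the cross-interpretation distance is small. So $1-p_1$ might be bounded below by the complementary tail probabilities, and I would attempt to combine those with $f\ge 1$ to get $p_1+p_2\le f$.

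If that fails, I would fall back in two ways. One is to state the lower bound only on the regime $p_1+p_2<1$, where it is proven above. The other is to observe that when $f\ge 1$ the intended content is that $\tilde C$ is pinned between $2-f$ and $1$, with the vacuous side handled by $\tilde C\in[0,1]$.
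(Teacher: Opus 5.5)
Your Steps 1--2, your upper bound, and your case $p_1+p_2<1$ match what the paper's proof actually establishes. The paper gets $\tilde C\le f$ on $p_1+p_2\ge 1$ from $p_1+p_2\ge 2-f$; your $\tilde C\le 1\le f$ is simpler and covers both cases. The case you flag, $p_1+p_2\ge 1$ with $1\le f<2$, is a genuine gap, and the complementary-event plan cannot close it. The events counted in $f$ are obstructions whose \emph{absence} is what forces $\reprd(R_1,R_1^\star)<\reprd(R_1,R_2)$ in \mainresultref{thm:congruity}. Their presence does not force the reverse inequality, so nothing available upper-bounds $p_1$ or $p_2$. In fact, with $f$ evaluated at a fixed $\eps$ (see below for why this is the only meaningful reading), the claimed inequality is false:
\begin{itemize}
\item Fix $\eps=1$, $\kappa(A_i,K_i,\Pi_i)=1$ and $\interpd(A_1,A_2)=7/2$.
\item Let all within-interpretation distances $\reprd(R_i,R_i^\star)$ be at most $1/2$.
\item Let the cross distances $\reprd(R_1,R_2)$ have a continuous law on $[2,3]$, symmetric about $5/2$. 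This is compatible with the triangle inequality.
\end{itemize}
Every within distance is below every cross distance, so $p_1=p_2=1$ and $\tilde C=0$. But the bracket in $f$ at this $\eps$ equals $0+2\cdot\tfrac12+0=1$. So the hypothesis $f\ge 1$ holds, and the claimed bound $2-f\le\tilde C$ becomes $1\le 0$. \mainresultref{thm:congruity} holds in this configuration, so no argument built on it can produce the missing inequality $p_1+p_2\le f$.

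The paper's proof does not supply this inequality either. It handles the case by noting $\tilde C(s)=\tilde C(2-s)$ for $s=p_1+p_2$, and asserting that bounds proved on one side transfer to the reflected side. But the only information is the one-sided bound $s\ge 2-f$, which is not preserved by $s\mapsto 2-s$. Transferring the lower bound needs exactly the unproved $s\le f$; the transfer is harmless for the upper bound only because $f\ge 1$. So you have located the real weak point. What is actually provable is your first fallback: $\tilde C\le 1\le f$ always, and $\tilde C\ge 2-f$ when $p_1+p_2\le 1$ (or under an added hypothesis $p_1+p_2\le f$). Your second fallback restates the claim rather than proving it.

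Note also that with $\inf_{\eps>0}$ read literally, each term of $f$ is nonincreasing in $\eps$ and tends to $0$ as $\eps\to\infty$ whenever the distances are finite. Then $f=0$, the hypothesis $f\ge 1$ is never met, and the statement is vacuous. It has content only for a fixed or restricted $\eps$. Indeed, the event inclusion behind \mainresultref{thm:congruity} is valid only when $\kappa(A_1,K_1,\Pi_1)+\eps\le\interpd(A_1,A_2)-\eps$, and the example above satisfies this.
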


\begin{proof}
By \defref{def:congruity}, $\tilde C(p_1, p_2) \triangleq 1 - |p_1 + p_2 - 1|$. We break $\tilde C$ into two cases:
\begin{equation*}
\tilde C(p_1, p_2) = \begin{cases}
2 - (p_1 + p_2) &\text{if}~1 \leq p_1 + p_2 \leq 2, \\
p_1 + p_2 &\text{if}~0 \leq p_1 + p_2 \leq 1. 
\end{cases}
\end{equation*}
Since $\tilde C(p_1 + p_2) = \tilde C(2 - p_1 - p_2)$, any lower (resp. upper) bound proved on one side transfers to a lower (resp. upper) bound on the reflected side. Thus, we separately bound each case. Consider first $1 \leq p_1 + p_2 \leq 2$. Then, 
\begin{align*}
\tilde C(p_1 , p_2) &= 2 - (p_1 + p_2), \\
&\leq \inf_{\eps > 0}\Big[\P[\reprd( R_1, R_1^\star ) > \kappa(A_1, K_1, \Pi_1) + \eps] \\ &\qquad+ 2\P[\reprd(R_1, R_2) < \interpd(A_1, A_2) - \eps]) + (\P[\reprd( R_2, R_2^\star ) > \kappa(A_2, K_2, \Pi_2) + \eps])\Big],
\end{align*}
where the last inequality follows from \oneeqref{eq:main-3}. 
Now the case $0 \leq p_1 + p_2 \leq 1$,
\begin{align*}
\tilde C(p_1, p_2) &= p_1 + p_2, \\
&\geq 2 - \inf_{\eps > 0}\Big[\P[\reprd( R_1, R_1^\star ) > \kappa(A_1, K_1, \Pi_1) + \eps] \\ &\qquad+ 2\P[\reprd(R_1, R_2) < \interpd(A_1, A_2) - \eps]) + (\P[\reprd( R_2, R_2^\star ) > \kappa(A_2, K_2, \Pi_2) + \eps])\Big].
\end{align*}
\end{proof}

\subsection{Intervention-Implementation Duality}\label{sec:compute-ie}

Our results in the previous sections show that estimating both the representation similarity and interpretive compression is crucial to understanding interpretive equivalence. In general, enumerating the set of implementations for a given interpretation is hard. Herein, we make a connection between implementations and interventions. Specifically, we show that it is possible to view interventions as instantiations of new implementations. With this perspective, we bound the sample complexity (in interventions) of interpretive compression estimation. 

First, given a set of interventions, we define a notion of closure for alignment classes. 
\begin{defn}\label{def:alignment-closure}
Let $K \triangleq (\sV, U, \sF, \succ)$ be some circuit and $A$ be an $\eta$-faithful interpretation of $K$. Also, let $\Pi$ be an alignment class that maps $K$'s variables onto $A$'s. Consider a set of interventions over all variables in $\sV$ which we denote as $\gI(\sV)$ (see \defref{def:intervention}). We say that $\Pi$ is \newterm{closed} under $\gI(\sV)$ if for each implementation $K^\star \in \Pi^{-1}(A)$, there exists $\tilde f \in \gI(\sV)$ such that $K^\star = \doc(\sV \gets \tilde f)$. 
\end{defn}

\begin{rmk*}
Any alignment class $\Pi$ is closed under a \newterm{natural set of interventions}. Simply, given circuits $K, K^\star \in \Pi^{-1}(A)$, this is the set of interventions that replace $K$'s transition functions, $\sF$, with \textit{all} of $K^\star$'s transition functions, $\sF^\star$.  
\end{rmk*}

\begin{rmk*}
Dually, instead of explicitly enumerating an alignment class, we could instead specify a set of interventions under which $K$'s interpretation is preserved. This implicitly induces an alignment class. With this method, one only needs to argue that the interventions are interpretation-preserving. This is the method we use in {\GetImpl} to construct implementation sets.
\end{rmk*}

\begin{defn}\label{def:empirical-diam}
Let $C$ be some set and $d: C\times C\to \R^{\geq 0}$ be some distance metric over $C$. Let $\{x_1, \ldots,x_m\} \subset C$. Then, the \newterm{empirical diameter} of $C$ is 
$\max_{1 \leq i < j \leq m} d(x_i, x_j)$.
\end{defn}

\begin{prop}
Let $d:\sV\times\sV\to\R^{\geq 0}$ be a distance metric between circuits. Let $K$ be a circuit, $A$ be some $\eta$-interpretation of $K \triangleq (\sV, U, \sF, \succ)$, and $\Pi$ be an alignment class. Let $\gI(\sV)$ be a set of interventions on $K$ under which $\Pi$ is closed. Let $\P$ be a distribution over $\gI$. Let $C = N(\Pi^{-1}(A), d, \eps/2)$ be the covering number of the implementation space $\Pi^{-1}(A)$. Define 
\[
    p_{\min} \triangleq \min_{1 \leq j \leq C} \P[\{ \iota \in \gI(\sV) :  \doc(\sV \gets \iota) \in B_j \}],
\]
where $\{B_i\}$ are balls of radius $\eps / 2$ that cover $\Pi^{-1}(A)$. Then, for 
\[
    m \geq \frac{\log(C) - \ln(\delta)}{p_{\min}},
\]
we have that $\P[\kappa(A, K, \Pi) - \hat\kappa_m(A, K, \Pi) \leq \eps] > 1-\delta$ where $\hat\kappa$ is the empirical diameter of $\Pi^{-1}(A)$ computed from $m$ sampled interventions.
\end{prop}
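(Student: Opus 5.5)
The plan is a standard covering/coupon-collector argument. Fix the $\eps/2$-cover $\{B_1,\ldots,B_C\}$ of $\Pi^{-1}(A)$ in the pseudometric $d$ of \oneeqref{eq:dist}. Let $\iota_1,\ldots,\iota_m \overset{\iid}\sim \P$ be the sampled interventions and $K_t \triangleq \doc(\sV\gets\iota_t)$ the resulting implementations. Because $\Pi$ is closed under $\gI(\sV)$ (\defref{def:alignment-closure}), every implementation is reachable by some intervention, so the $B_j$ are the right objects to hit. We also read $p_{\min}$ as a probability over interventions landing in $B_j$. For $\hat\kappa_m$ we use only those $K_t$ that actually lie in $\Pi^{-1}(A)$; this is implicit in calling it the empirical diameter of $\Pi^{-1}(A)$. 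With that convention, $\hat\kappa_m\le\kappa$ holds trivially, so only the direction $\kappa-\hat\kappa_m\le\eps$ needs work.

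\textbf{Step 1 (deterministic reduction).} Let $E$ be the event that every ball $B_j$ contains at least one sample $K_t$. On $E$, fix any $F,F'\in\Pi^{-1}(A)$. Pick balls $B_j\ni F$ and $B_{j'}\ni F'$, and samples $K_t\in B_j$, $K_s\in B_{j'}$. Since these are balls of radius $\eps/2$, the distance to a centre is at most $\eps/2$, so points in a common ball are within $\eps$. To get the stated constant $\eps$ rather than $2\eps$, we interpret the cover as a cover by sets of diameter at most $\eps/2$, which is a covering-number convention. The triangle inequality then gives
\[
d(F,F')\le d(F,K_t)+d(K_t,K_s)+d(K_s,F')\le \eps+\hat\kappa_m .
\]
Taking the supremum over $F,F'$ yields $\kappa-\hat\kappa_m\le\eps$ on $E$.

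\textbf{Step 2 (probability of $E$).} By the union bound and independence of the samples,
\[
\P[E^c]\le\sum_{j=1}^C(1-\P[B_j])^m\le C(1-p_{\min})^m\le C e^{-p_{\min}m}.
\]
This is less than $\delta$ once $m\ge(\ln C-\ln\delta)/p_{\min}$, reading $\log$ as the natural log. To obtain the strict inequality ``$>1-\delta$'', we either use $1-x<e^{-x}$ for $x>0$ or note that $p_{\min}>0$ is implicitly assumed, since otherwise the bound on $m$ is vacuous.

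\textbf{Main obstacle.} The substantive difficulty is not the probability estimate but the bookkeeping in Step 1. First, the factor in the radius must be settled: centre-radius $\eps/2$ naturally gives $2\eps$, so the statement needs either radius $\eps/4$ or the diameter-$\eps/2$ convention. Second, one must ensure that sampled interventions which leave $\Pi^{-1}(A)$ are handled correctly. My first attempt is to restrict $\hat\kappa_m$ to in-set samples and to adopt the diameter convention for the cover. If that is not acceptable, I would fall back to the statement with $C=N(\Pi^{-1}(A),d,\eps/4)$, which is a harmless change.
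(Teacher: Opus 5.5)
Your proposal is correct and follows the paper's argument: once every cell of the cover is hit, the triangle inequality bounds $\kappa-\hat\kappa_m$, and $1-x\le e^{-x}$ converts the miss probability into the sample size. You can also drop the in-set restriction on $\hat\kappa_m$, since the claim is one-sided and extra samples only enlarge $\hat\kappa_m$. Your explicit union bound and your factor-of-two worry both correct genuine slips in the paper's proof. That proof bounds the probability that some ball is missed by $(1-p_{\min})^m$, without the factor $C$ that the $\log C$ in the threshold requires. It also infers $\kappa-\hat\kappa_m\le\eps$ from centre-radius $\eps/2$ balls, although two points of such a ball can be $\eps$ apart; the chain then gives only $2\eps$, and the literal statement can fail. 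Your radius-$\eps/4$ (or diameter-$\eps/2$) version is the right fix.
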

\begin{proof}
From \defref{def:empirical-diam}, $\P[\kappa(A, K, \Pi) \geq \hat\kappa_m(A,K,\Pi)] = 1$. Consider the event $\kappa(A, K, \Pi) - \hat \kappa(A, K, \Pi) > \eps$. This occurs only if we draw $m$ interventions $\{\iota_1, \ldots, \iota_m\}$ where there exists at least one ball $B_i$ where there does not exist some $\iota_i$ such that $\doc(\sV\gets \iota_i) \in B_i$. This event occurs with probability at most $ (1- p_{\min})^m$. By the Poisson approximation, we have that $(1-p_{\min})^m \leq e^{-mp_{\min}}$. Setting this less than $\delta$ and rearranging, we yield the result.
\end{proof}

\end{document}